
\documentclass[10pt,twocolumn,letterpaper]{article}

\usepackage[pagenumbers]{cvpr} 

\usepackage{bm}
\usepackage{amsthm}
\usepackage{graphicx}
\usepackage{algorithm}
\usepackage{algorithmic}
\usepackage{multirow}
\usepackage{multicol}

\newtheorem{proposition}{Proposition}
\newcommand{\myparagraph}[1]{\noindent\textbf{#1}~}

%
%
\usepackage[dvipsnames]{xcolor}


%
\definecolor{cvprblue}{rgb}{0.21,0.49,0.74}
\usepackage[pagebackref,breaklinks,colorlinks,citecolor=cvprblue]{hyperref}


\title{InfLoRA: Interference-Free Low-Rank Adaptation for Continual Learning}


\author{%
Yan-Shuo Liang and Wu-Jun Li\thanks{Wu-Jun Li is the corresponding author.} \\
National Key Laboratory for Novel Software Technology, \\
Department of Computer Science and Technology, Nanjing University, P. R. China\\
\tt{\small liangys@smail.nju.edu.cn},\texttt{\small liwujun@nju.edu.cn} 
}

\begin{document}
\maketitle
\begin{abstract}
    Continual learning requires the model to learn multiple tasks sequentially. In continual learning, the model should possess the ability to maintain its performance on old tasks~(stability) and the ability to adapt to new tasks continuously~(plasticity). Recently, parameter-efficient fine-tuning~(PEFT), which involves freezing a pre-trained model and injecting a small number of learnable parameters to adapt to downstream tasks, has gained increasing popularity in continual learning. Although existing continual learning methods based on PEFT have demonstrated superior performance compared to those not based on PEFT, most of them do not consider how to eliminate the interference of the new task on the old tasks, which inhibits the model from making a good trade-off between stability and plasticity. In this work, we propose a new PEFT method, called \underline{in}terference-\underline{f}ree \underline{lo}w-\underline{r}ank \underline{a}daptation~(\mbox{InfLoRA}), for continual learning. \mbox{InfLoRA} injects a small number of parameters to reparameterize the pre-trained weights and shows that fine-tuning these injected parameters is equivalent to fine-tuning the pre-trained weights within a subspace. Furthermore, \mbox{InfLoRA} designs this subspace to eliminate the interference of the new task on the old tasks, making a good trade-off between stability and plasticity. Experimental results show that \mbox{InfLoRA} outperforms existing state-of-the-art continual learning methods on multiple datasets. Code is available at {\small\url{https://github.com/liangyanshuo/InfLoRA}}.
\end{abstract}

\section{Introduction}
\label{sec:intro}
Continual learning requires the model to learn multiple tasks sequentially~\cite{parisi2019continual}. To achieve continual learning, the model should possess two essential abilities, including the ability to keep its performance on the old tasks~(stability) and the ability to adapt to the new tasks continuously~(plasticity)~\cite{parisi2019continual}. Furthermore, two different scenarios are often considered in continual learning, including task-incremental scenario~\cite{masana2021importance} and class-incremental scenario~\cite{wang2023comprehensive}. Task-incremental scenario allows the model to get task identities during inference. On the contrary, class-incremental scenario does not allow the model to get task identities during inference, making the model learn to distinguish all the classes across all the tasks.



Recently, parameter-efficient fine-tuning~(PEFT)~\cite{hu2021lora,houlsby2019parameter,DBLP:conf/eccv/JiaTCCBHL22}, which involves freezing a pre-trained model and injecting a small number of learnable parameters to adapt to downstream tasks, has gained increasing popularity in continual learning~\cite{wang2022learning,smith2023coda,gao2023unified}, especially in the class-incremental scenario. More specifically, existing continual learning methods based on PEFT~\cite{khan2023introducing,DBLP:conf/eccv/0002ZESZLRSPDP22} inject the learnable parameters into a pre-trained model using some popular PEFT methods such as prompt-tuning~\cite{DBLP:conf/emnlp/LesterAC21} or low-rank adaptation~(LoRA)~\cite{hu2021lora}. Subsequently, these methods freeze the pre-trained weights and sequentially fine-tune the injected parameters on multiple tasks throughout the continual learning process.

Although continual learning methods based on PEFT have demonstrated superior performance compared to those not based on PEFT~\cite{wang2022learning}, most of them do not consider how to eliminate the interference of the new task on the old tasks, which inhibits the model from making a good trade-off between stability and plasticity.
Specifically, when learning a new task, existing continual learning methods based on PEFT either reuse the previously learned parameters to adapt to the new task~\cite{wang2022learning,gao2023unified} or randomly expand some parameters first and then adapt to the new task~\cite{smith2023coda,DBLP:conf/eccv/0002ZESZLRSPDP22,wang2022s}. 
During this process, the interference of the new task on the old tasks exists due to the shared parameters between new and old tasks, which means fine-tuning a pre-trained model on a new task may interfere with the model's performance on the old tasks. As a result, it is hard for the model to make a good trade-off between stability and plasticity.


In this work, we propose a new PEFT method, called \underline{in}terference-\underline{f}ree \underline{lo}w-\underline{r}ank \underline{a}daptation~(\mbox{InfLoRA}), for continual learning. The contributions of this work are listed as follows:
\begin{itemize}
  \item \mbox{InfLoRA} injects a small number of parameters to reparameterize the pre-trained weights and shows that fine-tuning these injected parameters is equivalent to fine-tuning the pre-trained weights within a subspace.
  \item \mbox{InfLoRA} designs this subspace to eliminate the interference of the new task on the old tasks, making a good trade-off between stability and plasticity.
  \item Experimental results show that \mbox{InfLoRA} outperforms existing state-of-the-art continual learning methods on multiple datasets.
\end{itemize}


\section{Related Work and Preliminaries}
\subsection{Related Work}
\myparagraph{Parameter-Efficient Fine-Tuning} 
Parameter-efficient fine-tuning~(PEFT) methods freeze a pre-trained model and inject a small number of learnable parameters to adapt to downstream tasks. In this way, PEFT methods
reduce the inefficiency of full fine-tuning methods which fine-tune all the parameters of a pre-trained model to learn downstream tasks. 
For example, Adapter~\cite{houlsby2019parameter} adds small modules in different layers of Transformers and only tunes these added modules to adapt to downstream tasks. Prompt-tuning~\cite{DBLP:conf/emnlp/LesterAC21} and Prefix-tuning~\cite{DBLP:conf/acl/LiL20} insert a set of learnable tokens into the input of the Transformer layers and only tune these tokens to adapt to downstream tasks. Low-rank adaptation~(LoRA)~\cite{hu2021lora} reparameterizes the pre-trained weights with low-rank branches and only tunes these branches to adapt to the downstream tasks. Although these methods tune much fewer learnable parameters than full fine-tuning, they always show comparable or even superior performance compared with full fine-tuning~\cite{zaken2022bitfit,fu2022adapterbias,hu2021lora,DBLP:conf/nips/MahabadiHR21}. Early PEFT methods focus on natural language processing~(NLP). 
Recently, PEFT methods have also been proposed for computer vision~(CV). For example, visual prompt tuning~(VPT)~\cite{DBLP:conf/eccv/JiaTCCBHL22} and AdapterFormer~\cite{chen2022adaptformer} apply prompt-tuning and Adapter techniques to CV tasks, respectively. Both of them exhibit comparable performance to full fine-tuning.

\myparagraph{Continual Learning}
Early continual learning was usually considered in the context of learning from scratch. Three types of continual learning methods are proposed, including regularization-based methods~\cite{zenke2017continual,jung2020continual,aljundi2018memory, kirkpatrick2017overcoming}, memory-based methods~\cite{DBLP:conf/nips/AljundiBTCCLP19,chrysakis2020online,DBLP:conf/nips/AljundiLGB19,sun2022exploring,DBLP:conf/nips/LiangL23}, and expansion-based methods~\cite{rusu2016progressive,DBLP:conf/nips/Hung0WCCC19,li2019learn}. Regularization-based methods employ a penalty loss~(regularization) to prevent important parameters of old tasks from changing too much. Memory-based methods maintain a memory buffer to store information about old tasks. Expansion-based methods dynamically expand the model's architecture for each new task. 

Recently, with the advancements of pre-trained models~\cite{he2022masked,dosovitskiy2020image,DBLP:conf/naacl/DevlinCLT19}, using pre-trained models for continual learning has gained increasing popularity. Some continual learning methods fully fine-tune the pre-trained models~\cite{boschini2022transfer,zheng2023preventing}, which has been shown to be inefficient. Other methods explore PEFT methods in continual learning. For instance, some existing continual learning methods~\cite{smith2023coda,wang2022learning,khan2023introducing,DBLP:conf/eccv/0002ZESZLRSPDP22} introduce prompt-tuning in continual learning, achieving much higher performance than previous methods that learn from scratch, especially in the class-incremental scenario. The method in~\cite{gao2023unified} introduces a framework in continual learning that can be combined with many existing PEFT methods, such as prompt-tuning, LoRA and Adapter. However, all these methods do not consider how to eliminate the interference of the new task on the old tasks, which inhibits the model from making a good trade-off between stability and plasticity.
\subsection{Preliminaries}
We first introduce low-rank adaptation~(LoRA)~\cite{hu2021lora}, a popular PEFT method related to our method. Then, we give the problem definition for continual learning. 

\myparagraph{Low-Rank Adaptation} 
LoRA~\cite{hu2021lora} is one of the most popular PEFT methods. It assumes that the changes of parameters lie in a low-rank space when the model is fully fine-tuned on a downstream task. Specifically, for a linear layer with the input dimension $d_{I}$ and the output dimension $d_{O}$, we represent its weight with $\bm{W}^{d_{O}\times d_{I}}$. Then, LoRA reparametrizes the pre-trained weight $\bm{W}$ by expanding a branch with two matrices, $\bm{A}\in \mathbb{R}^{d_{O}\times r}$ and $\bm{B}\in\mathbb{R}^{r\times d_{I}}$. Typically, $r$ is much smaller than the input dimension $d_{I}$ and output dimension $d_{O}$, making $\bm{A}$ a dimensionality increasing matrix and $\bm{B}$ a dimensionality reduction matrix. Finally, LoRA modifies the forward propagation in this linear layer as
$\bm{e} = \bm{W}\bm{h} + \bm{A}\bm{B}\bm{h}$. 
Here, $\bm{h}$ and $\bm{e}$ denote the input and output of this layer, respectively. LoRA initializes $\bm{A}$ as $\bm{0}$ and initializes $\bm{B}$ using Gaussian distribution. During the learning of the downstream tasks, LoRA freezes the pre-trained weight $\bm{W}$ and only fine-tunes the parameters $\bm{A}$ and $\bm{B}$.

\begin{figure*}[t]
  \setlength{\belowcaptionskip}{-0.5cm}
  \centering
  \includegraphics[width=\textwidth]{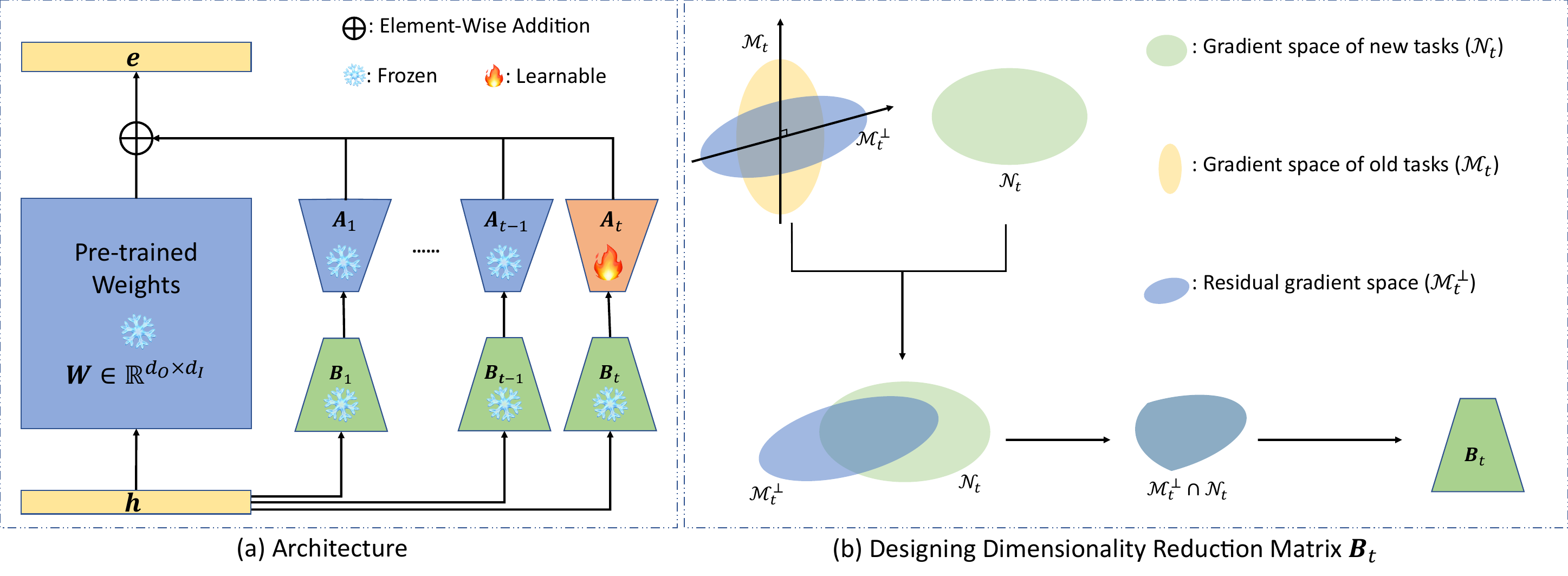}
 \captionsetup{skip=2pt}
 \caption{(a) The architecture of our \mbox{InfLoRA} in a certain linear layer of a Transformer. During the learning of the $t$-th task, the pre-trained weight and all the old branches are frozen, and only $\bm{A}_{t}$ is fine-tuned.~(b) The pipeline of designing dimensionality reduction matrix $\bm{B}_{t}$.}
  \label{fig:method}
\end{figure*}

\myparagraph{Problem Definition}
In continual learning, there is a sequence of tasks with different distributions. We define the task sequence as $\mathcal{D}=\{\mathcal{D}_{1},...,\mathcal{D}_{T}\}$, where the $t$-th task $\mathcal{D}_{t}=\{(\bm{x}_{i,t},y_{i,t})\}_{i=1}^{n_{t}}$. Here, $\bm{x}_{i,t}$ denotes an input sample and $y_{i,t}$ denotes its label. The objective of continual learning is to train a model sequentially on these tasks and ensure that the model performs well on all of them.

We follow existing continual learning methods~\cite{DBLP:conf/eccv/0002ZESZLRSPDP22,wang2022learning} based on PEFT and assume the model is a pre-trained Vision Transformer~(ViT)~\cite{dosovitskiy2020image}. 
Specifically, assume the model is $h_{\bm{\Phi}}(f_{\bm{\Theta}}(\cdot))$ where $h_{\bm{\Phi}}(\cdot)$ is the classifier with parameters $\bm{\Phi}$ and $f_{\bm{\Theta}}(\cdot)$ is the pre-trained ViT backbone with pre-trained parameters $\bm{\Theta}$.
Similar to existing work~\cite{DBLP:conf/eccv/0002ZESZLRSPDP22}, our focus is primarily on the class-incremental scenario, where task identities are unknown during inference. Furthermore, we concentrate on the exemplar-free setting~\cite{DBLP:conf/eccv/0002ZESZLRSPDP22,zhu2021prototype}, where no historical data can be fetched for rehearsal.

\section{Methodology}
Figure~\ref{fig:method}~(a) illustrates the architecture of our \mbox{InfLoRA} within a linear layer. Before learning the $t$-th new task, our \mbox{InfLoRA} expands a LoRA-like branch, which includes a dimensionality reduction matrix $\bm{B}_{t}\in\mathbb{R}^{r\times d_{I}}$ and a dimensionality increasing matrix $\bm{A}_{t}\in\mathbb{R}^{d_{O}\times r}$. 
Then, the forward propagation of this linear layer is modified as
\begin{align}\label{eq:taha}
  \bm{e}=&\bm{W}\bm{h}+\sum_{j=1}^{t}\bm{A}_{j}\bm{B}_{j}\bm{h}=\bm{W}_{t-1}\bm{h}+\bm{A}_{t}\bm{B}_{t}\bm{h}=\bm{W}_{t}\bm{h}.
\end{align}
Here, $\bm{W}_{t}=\bm{W}_{t-1}+\bm{A}_{t}\bm{B}_{t}=\bm{W}+\sum_{i=1}^{t}\bm{A}_{i}\bm{B}_{i}$. Similar to LoRA, our \mbox{InfLoRA} also initializes dimensionality increasing matrix $\bm{A}_{t}$ as $\bm{0}$. However, different from LoRA, which employs Gaussian distribution to initialize the dimensionality reduction matrix $\bm{B}$, our \mbox{InfLoRA} designs the dimensionality reduction matrix $\bm{B}_{t}$ before learning the $t$-th task. During the learning of the $t$-th task, \mbox{InfLoRA} fine-tunes $\bm{A}_{t}$ to learn the new task while keeping the pre-trained weight $\bm{W}$, all the old branches and the matrix $\bm{B}_{t}$ frozen. After learning the $t$-th tasks, for any given test sample belonging to the learned tasks, the model uses $\bm{W}_{t}$ and~(\ref{eq:taha}) to infer its label. This design ensures that our method is compatible with the class-incremental scenario where task identities are unknown during inference.


In the following subsections, we first build the relationship between our \mbox{InfLoRA} and the method that fine-tunes the pre-trained weight. Specifically, we show that fine-tuning parameters $\bm{A}_{t}$ is equivalent to fine-tuning the pre-trained weights $\bm{W}$ within a subspace spanned by the rows of $\bm{B}_{t}$. Note that $\bm{B}_{t}$ is designed before learning the $t$-th task, making this subspace pre-designed. Then, building upon this relationship, we introduce how our \mbox{InfLoRA} designs this subspace to eliminate the interference of the new task on the old tasks and make a good trade-off between stability and plasticity. 





\subsection{Relationship between \mbox{InfLoRA} and Fine-Tuning the Pre-Trained Weight}\label{sec:establish-relation}
When the $t$-th task arrives and our method has expanded a new branch, the forward propagation in this layer can be represented by~(\ref{eq:taha}). At this time, we can prove the following proposition:
\begin{proposition}\label{thm:relationship}
  When learning the $t$-th task with forward propagation represented by~(\ref{eq:taha}), fine-tuning $\bm{A}_{t}$ is equivalent to fine-tuning the pre-trained weight $\bm{W}$ within the subspace ${\rm span}\{\bm{b}_{1}^{t},...,\bm{b}_{r}^{t}\}$. 
Here, $\bm{b}_{i}^{t}$~($1\leq i\leq r$) denotes the $i$-th row vector of $\bm{B}_{t}$.
\end{proposition}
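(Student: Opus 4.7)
The plan is to establish the equivalence by analyzing how the effective update to $\bm{W}_{t}$ (equivalently to the pre-trained weight $\bm{W}$, since $\bm{W}_{t-1}$ is frozen after task $t-1$) decomposes when one performs gradient descent on $\bm{A}_{t}$. From~(\ref{eq:taha}), the only part of $\bm{W}_{t}$ that changes during task $t$ is $\bm{A}_{t}\bm{B}_{t}$, and $\bm{B}_{t}$ is frozen, so a perturbation $\bm{A}_{t}\mapsto \bm{A}_{t}+\Delta\bm{A}_{t}$ induces exactly $\bm{W}_{t}\mapsto \bm{W}_{t}+\Delta\bm{A}_{t}\bm{B}_{t}$. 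This single identity already does most of the work: every row of $\Delta\bm{A}_{t}\bm{B}_{t}$ is a linear combination of the rows $\bm{b}_{1}^{t},\dots,\bm{b}_{r}^{t}$ of $\bm{B}_{t}$, so the induced change in $\bm{W}_{t}$ lives, row by row, in ${\rm span}\{\bm{b}_{1}^{t},\dots,\bm{b}_{r}^{t}\}$.

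Next I would make the statement symmetric by checking the two inclusions that the word \emph{equivalent} calls for. (i) Every update obtained by fine-tuning $\bm{A}_{t}$ lies in the subspace: this follows directly from the identity above, applied to a gradient step $\Delta\bm{A}_{t}=-\eta\,\nabla_{\bm{A}_{t}}\mathcal{L}$. Using the chain rule through $\bm{W}_{t}=\bm{W}_{t-1}+\bm{A}_{t}\bm{B}_{t}$, one has $\nabla_{\bm{A}_{t}}\mathcal{L}=(\nabla_{\bm{W}_{t}}\mathcal{L})\bm{B}_{t}^{\top}$, so the effective update to $\bm{W}_{t}$ is $-\eta(\nabla_{\bm{W}_{t}}\mathcal{L})\bm{B}_{t}^{\top}\bm{B}_{t}$, and each of its rows is visibly a linear combination of the rows of $\bm{B}_{t}$. (ii) Conversely, any admissible update $\Delta\bm{W}$ whose rows lie in ${\rm span}\{\bm{b}_{1}^{t},\dots,\bm{b}_{r}^{t}\}$ can be written as $\Delta\bm{W}=\bm{C}\bm{B}_{t}$ for some $\bm{C}\in\mathbb{R}^{d_{O}\times r}$ (just collect the expansion coefficients row by row); taking $\Delta\bm{A}_{t}=\bm{C}$ realizes this update through the $\bm{A}_{t}$-parameterization.

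Combining these two inclusions gives that the set of reachable $\bm{W}_{t}$ via fine-tuning $\bm{A}_{t}$ coincides exactly with $\bm{W}_{t-1}$ plus the subspace ${\rm span}\{\bm{b}_{1}^{t},\dots,\bm{b}_{r}^{t}\}$ acting on the rows, which is the content of the proposition. I don't expect a serious obstacle here; the main subtlety, and the one I would be careful to phrase precisely, is the meaning of \emph{within a subspace} for a weight \emph{matrix}: the cleanest reading is that every row of the weight update is constrained to lie in the row span of $\bm{B}_{t}$, and both directions above should be stated in that form so that the equivalence is unambiguous.
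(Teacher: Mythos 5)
Your proof is correct and follows essentially the same route as the paper's: the chain rule gives $\nabla_{\bm{A}_{t}}\mathcal{L}=(\nabla_{\bm{W}_{t}}\mathcal{L})\bm{B}_{t}^{\top}$, so a gradient step on $\bm{A}_{t}$ changes the composed weight by $-\eta(\nabla_{\bm{W}_{t}}\mathcal{L})\bm{B}_{t}^{\top}\bm{B}_{t}$, i.e., the full fine-tuning update with each row projected onto ${\rm span}\{\bm{b}_{1}^{t},\dots,\bm{b}_{r}^{t}\}$, which is exactly the paper's identity $\Delta_{\bm{A}_{t}}\bm{W}_{t}=\Delta_{\bm{W}}\bm{W}_{t}\,\bm{B}_{t}^{\top}\bm{B}_{t}$. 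Your converse inclusion (any update whose rows lie in the row span of $\bm{B}_{t}$ can be written as $\bm{C}\bm{B}_{t}$ and realized by $\Delta\bm{A}_{t}=\bm{C}$) is a small addition that the paper leaves implicit and that makes the claimed ``equivalence'' more precise.
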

\begin{proof}
  When tuning the pre-trained weight $\bm{W}$ to learn the $t$-th task, we can compute the gradient of $\bm{W}$ based on the chain rule:
  \begin{align}\label{eq:full-grad}
    \frac{\partial \mathcal{L}}{\partial \bm{W}}=\frac{\partial \mathcal{L}}{\partial \bm{e}}\frac{\partial \bm{e}}{\partial \bm{W}}=\frac{\partial \mathcal{L}}{\partial \bm{e}}\bm{h}^{T}.
  \end{align}
  Here, $\mathcal{L}$ denotes the loss function. At this time, the change of $\bm{W}$ can be denoted as $\Delta \bm{W}=-\alpha\frac{\partial \mathcal{L}}{\partial \bm{W}}$, where $\alpha$ is the learning rate. Then, we can compute the change of the composed matrix $\bm{W}_{t}=\bm{W}+\sum_{j=1}^{t}\bm{A}_{j}\bm{B}_{j}$: 
  \begin{align}\label{eq:relationship-pre-trained}
    \Delta_{\bm{W}} \bm{W}_{t}=&[\bm{W}+\Delta\bm{W}+\sum_{j=1}^{t}\bm{A}_{j}\bm{B}_{j}]-(\bm{W}+\sum_{j=1}^{t}\bm{A}_{j}\bm{B}_{j})\nonumber\\
    =&\Delta \bm{W}=-\alpha\frac{\partial \mathcal{L}}{\partial \bm{W}_{t}}=-\alpha\frac{\partial \mathcal{L}}{\partial \bm{e}}\bm{h}^{T}
  \end{align}
  Here, we use $\Delta_{\bm{W}}\bm{W}_{t}$ to denote the change of the composed matrix $\bm{W}_{t}$ causing by the change of $\bm{W}$.

  Similarly, when tuning the expanded weight $\bm{A}_{t}$, we can get the gradient of $\bm{A}_{t}$ based on the chain rule:
  \begin{align}\label{eq:partial-grad}
    \frac{\partial \mathcal{L}}{\partial \bm{A}_{t}}=\frac{\partial \mathcal{L}}{\partial \bm{e}}\frac{\partial \bm{e}}{\partial \bm{A}_{t}}=\frac{\partial \mathcal{L}}{\partial \bm{e}}\bm{h}^{T}\bm{B}_{t}^{T}.
  \end{align}
  At this time, the change of $\bm{A}_{t}$ can be denoted as $\Delta \bm{A}_{t}=-\alpha\frac{\partial \mathcal{L}}{\partial \bm{A}_{t}}$. Then, we can compute the change of the composed matrix $\bm{W}_{t}=\bm{W}_{t-1}+\bm{A}_{t}\bm{B}_{t}$: 
  \begin{align}\label{eq:relationship-expanded-weight}
    \Delta_{\bm{A}_{t}} \bm{W}_{t}=&[\bm{W}_{t-1}+(\bm{A}_{t}+\Delta \bm{A}_{t})\bm{B}_{t}]-(\bm{W}_{t-1}+\bm{A}_{t}\bm{B}_{t})\nonumber\\
    =&\Delta \bm{A}_{t}\bm{B}_{t}=-\alpha\frac{\partial \mathcal{L}}{\partial \bm{A}_{t}}\bm{B}_{t}=-\alpha\frac{\partial \mathcal{L}}{\partial \bm{e}}\bm{h}^{T}\bm{B}_{t}^{T}\bm{B}_{t}\nonumber\\
    =&\Delta_{\bm{W}} \bm{W}_{t}\bm{B}_{t}^{T}\bm{B}_{t}
  \end{align}
  Here, we use $\Delta_{\bm{A}_{t}}\bm{W}_{t}$ to denote the change of the composed matrix $\bm{W}_{t}$ causing by the change of $\bm{A}_{t}$.
  The fourth equation in~(\ref{eq:relationship-expanded-weight}) holds because of~(\ref{eq:partial-grad}). The fifth equation in~(\ref{eq:relationship-expanded-weight}) holds because of~(\ref{eq:full-grad}).~(\ref{eq:relationship-expanded-weight}) shows that $\Delta_{\bm{A}_{t}}\bm{W}_{t}$ is equal to $\Delta_{\bm{W}}\bm{W}_{t}$ multiplying a projection matrix $\bm{B}_{t}^{T}\bm{B}_{t}$. Since $\bm{B}_{t}^{T}\bm{B}_{t}$ projects each row vector of $\Delta_{\bm{W}}\bm{W}_{t}$ into the subspace ${\rm span}\{\bm{b}_{1}^{t},...,\bm{b}_{r}^{t}\}$, Proposition~\ref{thm:relationship} holds.
\end{proof}
Proposition~\ref{thm:relationship} has demonstrated that using our \mbox{InfLoRA} to train the model is equivalent to directly fine-tuning the pre-trained weight $\bm{W}$ within the subspace ${\rm span}\{\bm{b}_{1}^{t},...,\bm{b}_{r}^{t}\}$. Therefore, before learning the $t$-th task, we can design matrix $\bm{B}_{t}$ such that learning the $t$-th task in the subspace ${\rm span}\{\bm{b}_{1}^{t},...,\bm{b}_{r}^{t}\}$ will not interfere with the performance of the model on the old tasks.

\subsection{Eliminating the Interference of the New Task on the Old Tasks}\label{sec:stability}
We first introduce the desired characteristics that \mbox{InfLoRA} aims to let the subspace ${\rm span}\{\bm{b}_{1}^{t},...,\bm{b}_{r}^{t}\}$ have. With these characteristics, \mbox{InfLoRA} can eliminate the interference of the new task on the old tasks and make a good trade-off between stability and plasticity. Then, we introduce how to design dimensionality reduction matrix $\bm{B}_{t}$ so that subspace ${\rm span}\{\bm{b}_{1}^{t},...,\bm{b}_{r}^{t}\}$ has these characteristics.
\subsubsection{Desired Characteristics}\label{sec:find}
First, \mbox{InfLoRA} aims to make the subspace ${\rm span}\{\bm{b}_{1}^{t},...,\bm{b}_{r}^{t}\}$ orthogonal to the gradients of all the old tasks. In this way, according to Proposition~\ref{thm:relationship}, the update of \mbox{InfLoRA}, which can be represented as $\Delta_{\bm{A}_{t}}\bm{W}_{t}$, will also be orthogonal to the gradient of the old tasks. Note that the idea of making the update for the new task orthogonal to the gradient of the old tasks to eliminate the interference of the new task on the old tasks has been proposed in many existing continual learning methods~\cite{DBLP:conf/iclr/SahaG021,lin2021trgp}. However, all these existing methods are designed for continual learning from scratch, involving updating all parameters of the model, which is incompatible with the setting in PEFT. On the contrary, our method is a PEFT method, which only tunes the parameters in $\bm{A}_{t}$.



Besides eliminating the interference of new tasks on old tasks, our \mbox{InfLoRA} further makes the subspace ${\rm span}\{\bm{b}_{1}^{t},...,\bm{b}_{r}^{t}\}$ lie in a subspace that the gradient of the new task lies in to make a good trade-off between stability and plasticity. Specifically, existing work~\cite{jie2023fact} has shown that during fine-tuning, the weight increments of pre-trained ViT exhibit redundancy in terms of weight rank. Therefore, the gradients of the new task lie in a low-dimensional subspace. Our method makes ${\rm span}\{\bm{b}_{1}^{t},...,\bm{b}_{r}^{t}\}$ not only orthogonal to the gradient of the old tasks but also lie in the subspace in which the gradients of the new task $t$ lie. By doing so, our method makes the model's focus on the new task when eliminating the interference of the new task on the old tasks, thereby making a good trade-off between stability and plasticity. Section~\ref{sec:ablation} verifies the effectiveness of these two characteristics.




\subsubsection{Designing Dimensionality Reduction Matrix}
InfLoRA first approximates the gradient space of the new task and old tasks. Here, we use $\mathcal{N}_{t}$ to represent the gradient space of the new task approximated by InfLoRA. Similarly, we use $\mathcal{M}_{t}$ to represent the gradient space of previous $t-1$ old tasks approximated by InfLoRA. We also use $\mathcal{M}_{t}^{\bot}$ to denote the residual gradient space, which is orthogonal to the space $\mathcal{M}_{t}$. Then, in order to meet the characteristics described in Section~\ref{sec:find}, \mbox{InfLoRA} ensures that each row of $\bm{B}_{t}$ lies in $\mathcal{N}_{t}\cap \mathcal{M}_{t}^{\bot}$. In other words, \mbox{InfLoRA} makes ${\rm span}\{\bm{b}_{1}^{t},...,\bm{b}_{r}^{t}\}\subseteq \mathcal{N}_{t}\cap \mathcal{M}_{t}^{\bot}$.

Existing works~\cite{DBLP:conf/iclr/SahaG021,liang2023adaptive} have shown that the gradient update of the linear layer lies in the span of the inputs. Please refer to supplementary material for a detailed explanation of this proposition. Therefore, InfLoRA uses the input matrix of the new task $t$ to approximate the gradient space of the new task. Specifically, \mbox{InfLoRA} computes the input matrix $\bm{H}_{t}=[\bm{h}_{1}^{t},...,\bm{h}_{n}^{t}]$, with each column of $\bm{H}_{t}$ representing an input vector of the $t$-th task. Then, InfLoRA considers $\mathcal{N}_{t}$ as the subspace spanned by the columns of matrix $\bm{H}_{t}$.

However, InfLoRA cannot use the input matrix of the old tasks to approximate the gradient space of the old tasks since the data from the old tasks is not available when the model learns the new tasks. Instead, existing methods such as gradient projection memory~(GPM)~\cite{DBLP:conf/iclr/SahaG021} and dual gradient projection memory~(DualGPM)~\cite{liang2023adaptive} can learn a matrix to preserve information about the gradients of the old tasks. \mbox{InfLoRA} incorporates DualGPM to preserve gradient information. With the assistance of DualGPM, the model can learn either a matrix $\bm{M}_{t}\in\mathbb{R}^{d_{I}\times k_{t}}$ or a matrix $\bm{M}_{t}^{\bot}\in\mathbb{R}^{d_{I}\times (d_{I}-k_{t})}$. Here, the columns of $\bm{M}_{t}$ contribute to the orthonormal bases of $\mathcal{M}_{t}$ and the columns of $\bm{M}_{t}^{\bot}$ contribute to the orthonormal bases of $\mathcal{M}_{t}^{\bot}$. $k_{t}$ denotes the dimension of $\mathcal{M}_{t}$. For detailed information of how DualGPM maintains orthonormal bases $\bm{M}_{t}$ or $\bm{M}_{t}^{\bot}$, please refer to supplementary material or the original paper~\cite{liang2023adaptive}.

After approximating the gradient space of the new task and old tasks, InfLoRA
gets the component of $\mathcal{N}_{t}$ which lies in $\mathcal{M}_{t}^{\bot}$. Specifically, when the model maintains $\mathcal{M}_{t}$, InfLoRA performs the operation
\begin{align}\label{eq:proj}
  \hat{\bm{H}}_{t}=\bm{H}_{t}-\bm{M}_{t}\bm{M}_{t}^{T}\bm{H}_{t}.
\end{align}
Similarly, when the model maintains $\mathcal{M}_{t}^{\bot}$, InfLoRA performs the operation
\begin{align}\label{eq:proj-dual}
  \hat{\bm{H}}_{t}=\bm{M}_{t}^{\bot}(\bm{M}_{t}^{\bot})^{T}\bm{H}_{t}.
\end{align}
Note that when $t=1$, $\mathcal{M}_{t}$ is a null space and $\hat{\bm{H}}_{t}=\bm{H}_{t}$.
Obviously, each column of $\hat{\bm{H}}_{t}$ lies in $\mathcal{N}_{t}\cap \mathcal{M}_{t}^{\bot}$. However, since $(\hat{\bm{H}}_{t})^{T}\in \mathbb{R}^{n\times d_{I}}$ and $\bm{B}_{t}\in \mathbb{R}^{r\times d_{I}}$ have different shapes, InfLoRA can not directly define $\bm{B}_{t}$ as $(\hat{\bm{H}}_{t})^{T}$. Note that $n\gg r$, InfLoRA uses the principal components of $(\hat{\bm{H}}_{t})^{T}$ to set $\bm{B}_{t}$. Specifically, singular value decomposotion~(SVD) is performed on $(\hat{\bm{H}}_{t})^{T}=\bm{V}_{t}\bm{\Sigma}_{t}\bm{U}_{t}$. Then, InfLoRA designs $\bm{B}_{t}$ by 
\begin{align}\label{eq:initialize-b}
  \bm{B}_{t}=(\bm{U}_{t})_{r}.
\end{align}
Here, $(\bm{U}_{t})_{r}$ denotes the rows of $\bm{U}_{t}$ corresponding to the top-$r$ singular values. Figure~\ref{fig:method}~(b) illustrates the pipeline of designing matrix $\bm{B}_{t}$.


Note that DualGPM expands subspace $\mathcal{M}_{t}$ and reduces subspace $\mathcal{M}_{t}^{\bot}$ when the number of tasks increases. Since InfLoRA constrains the update of the model within the subspace $\mathcal{N}_{t}\cap \mathcal{M}_{t}^{\bot}\subseteq \mathcal{M}_{t}^{\bot}$, the space for learning the new task reduces when the number of tasks increases. However, by adjusting the approximation error of the gradient for the old tasks, DualGPM can expand $\mathcal{M}_{t}$ slowly and reduce $\mathcal{M}_{t}^{\bot}$ slowly. Therefore, the constraints imposed by InfLoRA do not excessively affect the model's learning of new tasks. Please refer to supplementary material for a detailed explanation.

\begin{algorithm}[t]
  \caption{\mbox{InfLoRA} for Continual Learning}
  \label{alg:taha}
  \begin{algorithmic}[1]
  \small
  \STATE {\bfseries Input:} The data of different tasks $\{\mathcal{D}_{t}\}_{t=1}^{T}$, a pre-trained ViT model $f_{\bm{\Theta}}(\cdot)$.
  \STATE {\bfseries Output:} Network $f_{\bm{\Theta}}(\cdot)$ with learned parameters $\bm{W}_{t}$.
  \FOR {$t$ in $1:T$}
  \STATE {Design $\bm{B}_{t}$ through~(\ref{eq:initialize-b});}
  \STATE {Expand a new branch for the $t$-th task; }
  \FOR {$\mathcal{B}_{t}$ sampled from $\mathcal{D}_{t}$}
  \STATE {Compute the loss $\mathcal{L}(f_{\bm{\Theta}}(\mathcal{B}_{t}))$ through~(\ref{eq:masked-loss}) and update the parameters;}
  \ENDFOR
\STATE {Preserve the information about the gradient of the $t$-th task through DualGPM;}
  \ENDFOR
  \end{algorithmic}
\end{algorithm}


\subsection{Whole Process of \mbox{InfLoRA}}\label{sec:relation}
Algorithm~\ref{alg:taha} outlines the whole process of \mbox{InfLoRA} in continual learning. When the $t$-th new task arrives, \mbox{InfLoRA} first designs $\bm{B}_{t}$ through~(\ref{eq:initialize-b}) and expands a new branch. Then, \mbox{InfLoRA} learns the $t$-th task by fine-tuning the newly expanded branch. Please note that, based on empirical findings from existing methods~\cite{smith2023coda, gao2023unified}, we employ the local cross-entropy~(CE) loss as the learning objective, as it usually performs better than the global CE loss in continual learning methods based on PEFT. The local CE is the CE loss constrained to the classes of the current new task, which can be denoted as
\begin{align}\label{eq:masked-loss}
  \mathcal{L}(\mathcal{D}_{t})=\frac{1}{|\mathcal{D}_{t}|}\sum_{(\bm{x},y)\in \mathcal{D}_{t}}\mathcal{L}_{ce}({\rm mask}(h_{\bm{\Phi}}(f_{\bm{\Theta}}(\bm{x}))),y).
\end{align}
Here, ${\rm mask}(\cdot)$ is a function that filters out the logits of the old classes and $\mathcal{L}_{ce}$ denotes the standard CE loss. After learning the $t$-th new task, \mbox{InfLoRA} follows the DualGPM to preserve the information about the gradient of the $t$-th task.

Note that the branch corresponding to the $t$-th task will be frozen once the model has learned the $t$-th task. Since the expanded branches are linear transformations, we can integrate the old branches into the pre-trained weight to reduce the expanded parameters. Specifically, after learning the first task, \mbox{InfLoRA} integrates the first branch into the pre-trained weight and obtains the weight $\bm{W}_{1}=\bm{W}+\bm{A}_{1}\bm{B}_{1}$. Before learning the $t$-th new task~($t>1$), \mbox{InfLoRA} maintains the weight $\bm{W}_{t-1}$. After learning the $t$-th task, \mbox{InfLoRA} integrates the $t$-th branch into $\bm{W}_{t-1}$ and obtains $\bm{W}_{t}=\bm{W}_{t-1}+\bm{A}_{t}\bm{B}_{t}$. In this way, the parameters in $\bm{A}_{t}$ and $\bm{B}_{t}$ do not need to be maintained in the learning of subsequent tasks. Therefore, during the whole learning process, the number of parameters expanded by \mbox{InfLoRA} equals the number of parameters in a single branch. Since a single branch contains $(d_{I}+d_{O})r$ parameters, the number of parameters expanded by \mbox{InfLoRA} is $(d_{I}+d_{O})r$ all the time.


\begin{table*}[t]
  \caption{Results (\%) on ImageNet-R. Results are included for 5 tasks, 10 tasks, and 20 tasks. We report results averaged over 5 trials. %
  }
  \label{tab:imnet-r_main}
  \centering
  \setlength{\tabcolsep}{6pt}  
  
  \begin{tabular}{l||c c||c c||c c} 
  \hline 
  Tasks & \multicolumn{2}{c||}{5} & \multicolumn{2}{c||}{10} & \multicolumn{2}{c}{20} \\
  \hline
  \rule{0pt}{10pt} Method & $ACC_5$ ($\uparrow$) & $\overline{ACC}_5$ ($\uparrow$) & $ACC_{10}$ ($\uparrow$) & $\overline{ACC}_{10}$ ($\uparrow$) & $ACC_{20}$ ($\uparrow$) & $\overline{ACC}_{20}$ ($\uparrow$) \\
  \hline
  \emph{joint}  
  & $81.14 \pm 0.34$ & - 
  & $81.14 \pm 0.34$ & - 
  & $81.14 \pm 0.34$ & -   \\ 
  \hline
  \emph{sequential}   
  & $58.74 \pm 1.28$ & $72.91 \pm 0.28$       
  & $46.07 \pm 1.15$ & $62.91 \pm 0.68$ 
  & $34.62 \pm 0.85$ & $51.15 \pm 1.50$ \\
  L2P~\cite{wang2022learning} 
  & $64.13 \pm 0.78$ & $68.66 \pm 0.41$
  & $62.54 \pm 0.24$ & $67.98 \pm 0.27$ 
  & $57.92 \pm 0.28$ & $64.57 \pm 0.29$ \\
  DualPrompt~\cite{DBLP:conf/eccv/0002ZESZLRSPDP22} 
  & $67.88 \pm 0.17$ & $71.16 \pm 0.31$
  & $65.41 \pm 0.52$ & $69.39 \pm 0.43$ 
  & $61.00 \pm 0.72$ & $65.80 \pm 0.67$ \\
  CODA-P~\cite{smith2023coda}   
  & $73.09 \pm 0.21$ & $76.91 \pm 0.21$	
  & $71.47 \pm 0.35$ & $75.82 \pm 0.29$
  & $67.28 \pm 0.30$ & $72.34 \pm 0.17$ \\ 
  C-LoRA~\cite{DBLP:journals/corr/abs-2304-06027}
  & $75.85 \pm 0.31$ & $78.85 \pm 0.34$	
  & $71.89 \pm 0.45$ & $75.33 \pm 0.28$
  & $65.71 \pm 0.60$ & $70.63 \pm 0.85$ \\
  LAE~\cite{gao2023unified}
  & $73.84 \pm 0.14$ & $77.29 \pm 0.45$ 
  & $71.70 \pm 0.39$ & $76.71 \pm 0.10$ 
  & $66.98 \pm 0.35$ & $73.72 \pm 0.05$ \\
  \mbox{InfLoRA}-b5
  & $75.28 \pm 0.01$ & $78.95 \pm 0.08$
  & $74.13 \pm 0.18$ & $78.54 \pm 0.14$   
  & $68.41 \pm 0.29$ & $74.00 \pm 0.50$ \\
  \mbox{InfLoRA}
  & \textbf{77.52 $\pm$ 0.37} & \textbf{82.01 $\pm$ 0.12} 
  & \textbf{75.65 $\pm$ 0.14} & \textbf{80.82 $\pm$ 0.24} 
  & \textbf{71.01 $\pm$ 0.45} & \textbf{77.28 $\pm$ 0.45}  \\ 
  \hline
  \end{tabular}
  \vskip -0.1in
\end{table*}

\section{Experiments}
\subsection{Experimental Settings}\label{sec:exp-settings}
\myparagraph{Datasets and Evaluation Metric}
Similar to existing continual learning methods~\cite{gao2023unified,wang2022learning} based on PEFT, we use ImageNet-R~\cite{hendrycks2021many},  CIFAR100~\cite{krizhevsky2009learning}, and DomainNet~\cite{peng2019moment} to train and evaluate the models. Imagenet-R is generated through artistic processing of 200 classes from ImageNet~\cite{deng2009imagenet}. This dataset is introduced to continual learning by existing work~\cite{DBLP:conf/eccv/0002ZESZLRSPDP22} and has become a standard benchmark for continual learning methods based on PEFT. CIFAR100 is a dataset commonly used in existing continual learning works. DomainNet contains 345 classes and is introduced by some existing works~\cite{smith2023coda,wang2022s} for continual learning. Following existing continual learning work~\cite{smith2023coda}, we split ImageNet-R into 5, 10, and 20 tasks, with each task containing 40, 20, and 10 classes. We split CIFAR100 into 10 tasks, and each task constrains 10 classes. We split DomainNet into 5 tasks, and each task contains 69 classes.

Following existing continual learning methods~\cite{gao2023unified,wang2022learning}, we evaluate the performance of the model through two popular metrics, including the final accuracy $ACC_{T}$ and the averaged accuracy $\overline{ACC}_{T}=\frac{1}{T}\sum_{i=1}^{T}ACC_{i}$, where $T$ denotes the total number of tasks and $ACC_{i}$ is defined as
\begin{align}
  ACC_{i}=\frac{1}{i}\sum_{j=1}^{i}a_{i,j}.
\end{align}
Here, $a_{i,j}$ denotes the accuracy of the $j$-th task once the model has learned the $i$-th task.

\myparagraph{Baselines}
We compare our \mbox{InfLoRA} with state-of-the-art continual learning methods based on PEFT, including learn to prompt~(L2P)~\cite{wang2022learning}, DualPrompt~\cite{DBLP:conf/eccv/0002ZESZLRSPDP22}, continual decomposed attention-based prompt~(CODA-P)~\cite{smith2023coda}, learning accumulation ensemble~(LAE)~\cite{gao2023unified}, continual low-rank adaptation~(C-LoRA)~\cite{DBLP:journals/corr/abs-2304-06027}.
For LAE, we implement it with LoRA~\cite{hu2021lora}. 
Following existing works~\cite{smith2023coda,gao2023unified}, we also include two methods without continual learning, \emph{joint} and \emph{sequential}, in the comparison. Here, \emph{joint} denotes the method that learns all the tasks jointly, while \emph{sequential} denotes the method that learns all the tasks sequentially without any operation to overcome the forgetting of the model. The accuracy of \emph{joint} can be treated as the accuracy upper-bound and the accuracy of \emph{sequential} can be treated as the accuracy lower-bound.

\myparagraph{Architecture and Training Details} 
We follow existing works~\cite{gao2023unified,DBLP:conf/eccv/0002ZESZLRSPDP22} to perform experiments. Specifically, we use the ViT-B/16 backbone~\cite{dosovitskiy2020image} supervised pre-trained on ImageNet 21K as the pre-trained model.


For all the methods, we follow existing works~\cite{smith2023coda,wang2022learning,gao2023unified} and use the Adam~\cite{kingma2014adam} optimizer with running averages of gradient and its square~($\beta_{1} = 0.9$, $\beta_{2} = 0.999$). Each task is trained for 50 epochs on ImageNet-R, 20 epochs on CIFAR100 and 5 epochs on DomainNet. The batch size is set to 128 for all the experiments. Since our \mbox{InfLoRA} shares a similar architecture to LoRA, we follow existing work~\cite{gao2023unified} and insert the architecture of our \mbox{InfLoRA} in the key and value of the attention module. Furthermore, existing method DualPrompt~\cite{DBLP:conf/eccv/0002ZESZLRSPDP22} treats the inserted blocks as hyperparameters and searches for the best positions for their prompts. On the contrary, we insert the architecture of \mbox{InfLoRA} for all the Transformer blocks to avoid searching. We also implement a variant of our method, which inserts the bottom 5 Transformer blocks like existing methods DualPrompt and CODA-P. We call this variant \mbox{InfLoRA}-b5. 
As for the hyperparameter $r$, we determine its value through a grid search on a validation dataset. 

\begin{table*}[t]
  \caption{Results (\%) on CIFAR100 and DomainNet. We report results over 5 trials. %
  }
  \label{tab:cifar100-domainnet}
  \centering
  \setlength{\tabcolsep}{10pt}  
  
  \begin{tabular}{l||c c||c c} 
  \hline 
  Tasks & \multicolumn{2}{c||}{CIFAR100} & \multicolumn{2}{c}{DomainNet}  \\
  \hline
  \rule{0pt}{10pt} Method & $ACC_{10}$ ($\uparrow$) & $\overline{ACC}_{10}$ ($\uparrow$) & $ACC_{5}$ ($\uparrow$) & $\overline{ACC}_{5}$ ($\uparrow$) \\
  \hline
  \emph{joint}  
  & $91.92 \pm 0.05$ & - 
  & $77.72 \pm 0.04$ & - \\ 
  \hline
  \emph{sequential}   
  & $62.18 \pm 3.59$ & $80.42 \pm 0.23$       
  & $53.44 \pm 1.21$ & $69.09 \pm 0.33$ \\
  L2P~\cite{wang2022learning} 
  & $82.48 \pm 0.20$ & $87.64 \pm 0.25$
  & $70.16 \pm 0.05$ & $75.60 \pm 0.03$ \\
  DualPrompt~\cite{DBLP:conf/eccv/0002ZESZLRSPDP22} 
  & $84.42 \pm 0.30$ & $90.06 \pm 0.07$
  & $72.14 \pm 0.05$ & $77.71 \pm 0.06$ \\
  CODA-P~\cite{smith2023coda}   
  & $86.62 \pm 0.11$ & $91.08 \pm 0.28$	
  & $73.23 \pm 0.13$ & $78.72 \pm 0.07$ \\ 
  C-LoRA~\cite{DBLP:journals/corr/abs-2304-06027}
  & $82.97 \pm 0.47$ & $88.81 \pm 0.34$ 
  & $69.34 \pm 0.13$ & $75.25 \pm 0.11$ \\
  LAE~\cite{gao2023unified}
  & $84.15 \pm 0.10$ & $89.84 \pm 0.03$ 
  & $66.85 \pm 0.40$ & $75.01 \pm 0.17$ \\
  \mbox{InfLoRA}-b5
  & \textbf{87.06 $\pm$ 0.25} & $91.59 \pm 0.13$
  & $73.26 \pm 0.50$ & $78.82 \pm 0.34$ \\
  \mbox{InfLoRA}
  & $86.51 \pm 0.73$ & \textbf{91.70 $\pm$ 0.32} 
  & \textbf{74.53 $\pm$ 0.23} & \textbf{79.57 $\pm$ 0.57} \\ 
  \hline
  \end{tabular}
  \vskip -0.1in
\end{table*}

\begin{figure*}[t]
  \setlength{\belowcaptionskip}{-0.5cm}
  \centering
  \includegraphics[width=0.95\textwidth]{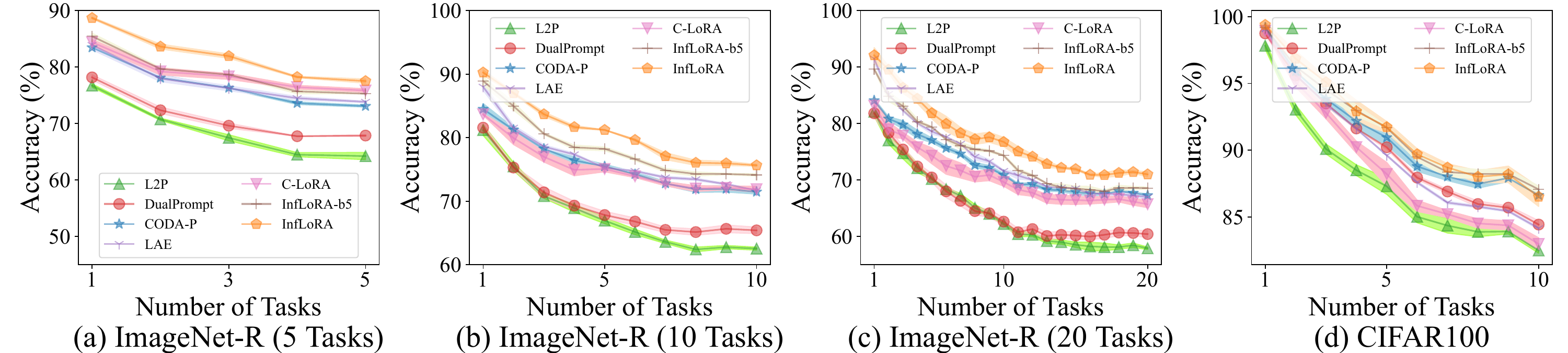}
 \captionsetup{skip=2pt}
 \caption{Variation of the performance of different methods during the learning of ImageNet-R and CIFAR100.}
  \label{fig:change-imagenet-r}
\end{figure*}

\subsection{Experimental Results}
\myparagraph{Accuracy}
Table~\ref{tab:imnet-r_main} shows the results of different methods on ImageNet-R with a different number of tasks. Table~\ref{tab:cifar100-domainnet} shows the results of different methods on CIFAR100 and DomainNet. We can find that our methods \mbox{InfLoRA} and \mbox{InfLoRA}-b5 outperform existing continual learning methods.

Figure~\ref{fig:change-imagenet-r} shows the variation of the accuracy of different continual learning methods on ImageNet-R and CIFAR100. We can find that our method outperforms existing methods not only at the end of the learning but also throughout the whole learning process. This indicates that our InfLoRA eliminates the interference of the new task on the old tasks and thus the accuracy of our method decreases slower compared to other methods.
\myparagraph{Analysis of Expanded Parameters}\label{sec:expanded}
Figure~\ref{fig:hyper} shows the number of expanded parameters and the accuracy of different methods on ImageNet-R and CIFAR100. For L2P, DualPrompt and CODA-P, their expanded parameters are included in the added prompts and corresponding key. For LAE, its expanded parameters are the inserted LoRA modules and an additional copy. For C-LoRA, its expanded parameters are inserted LoRA modules. For our method, the expanded parameters are $\bm{B}_{t}$ and $\bm{A}_{t}$. The details of computing the number of expanded parameters for different methods are given in supplementary material. We can find that CODA-P and C-LoRA expand much more parameters than other methods. Furthermore, our methods \mbox{InfLoRA} and \mbox{InfLoRA}-b5 expand comparable parameters to L2P, DualPrompt and LAE but perform better than these methods.

\begin{table*}[t]
  \caption{Results of different variants on ImageNet-R with a different number of tasks.  %
  }
  \label{tab:ab-imnet-r_main}
  \centering
  \setlength{\tabcolsep}{3.5pt}  
  
  \begin{tabular}{c||c c||c c||c c} 
  \hline 
  Tasks & \multicolumn{2}{c||}{5} & \multicolumn{2}{c||}{10} & \multicolumn{2}{c}{20} \\
  \hline
  \rule{0pt}{10pt} & $ACC_5$ ($\uparrow$) & $\overline{ACC}_5$ ($\uparrow$) & $ACC_{10}$ ($\uparrow$) & $\overline{ACC}_{10}$ ($\uparrow$) & $ACC_{20}$ ($\uparrow$) & $\overline{ACC}_{20}$ ($\uparrow$) \\
  \hline
  Random $\rightarrow\bm{B}_{t}$ 
  & $72.49 \pm 0.38$ & $79.40 \pm 0.29$
  & $67.38 \pm 0.41$ & $76.62 \pm 0.06$
  & $56.17 \pm 0.29$ & $69.24 \pm 0.35$ \\
  $\mathcal{N}_{t}\rightarrow \bm{B}_{t}$ 
  & $67.01 \pm 0.11$ & $76.09 \pm 0.04$
  & $57.91 \pm 0.30$ & $70.23 \pm 0.59$ 
  & $40.73 \pm 0.29$ & $59.68 \pm 0.52$ \\
  $\mathcal{M}_{t}^{\bot}\rightarrow \bm{B}_{t}$   
  & $75.94 \pm 0.53$ & $80.69 \pm 0.27$	
  & $74.61 \pm 0.62$ & $79.67 \pm 0.27$
  & $68.79 \pm 0.42$ & $75.74 \pm 0.26$  \\ 
  $\mathcal{N}_{t}\cap \mathcal{M}_{t}^{\bot}\rightarrow \bm{B}_{t}$~(\mbox{InfLoRA})
  & \textbf{77.52 $\pm$ 0.37} & \textbf{82.01 $\pm$ 0.12} 
  & \textbf{75.65 $\pm$ 0.14} & \textbf{80.82 $\pm$ 0.24} 
  & \textbf{71.01 $\pm$ 0.45} & \textbf{77.28 $\pm$ 0.45}  \\
  \hline
  \end{tabular}
  \vskip -0.15in
\end{table*}

\myparagraph{Ablation Study}\label{sec:ablation}
We perform experiment to verify the effectiveness of designing dimensionality reduction matrix $\bm{B}_{t}$ by~(\ref{eq:initialize-b}). Specifically, we explore three different variants for designing $\bm{B}_{t}$. The first variant designs $\bm{B}_{t}$ randomly using Gaussian distribution. We call this variant `Random $\rightarrow\bm{B}_{t}$'. The second variant discards the operation in~(\ref{eq:proj}) or~(\ref{eq:proj-dual}) and directly sets $\hat{\bm{H}}_{t}=\bm{H}_{t}$. Through this way, this variant ensures that each row of $\bm{B}_{t}$ lies in $\mathcal{N}_{t}$ while ignoring $\mathcal{M}_{t}^{\bot}$. We call this variant `$\mathcal{N}_{t}\rightarrow\bm{B}_{t}$'. The third variant does not compute the input matrix but initializes $\bm{H}_{t}$ using a Gaussian distribution before applying the operation in~(\ref{eq:proj}) or~(\ref{eq:proj-dual}). In this way, this variant ensures that each row of $\bm{B}_{t}$ lies in ${\mathcal{M}}_{t}^{\bot}$ while ignoring $\mathcal{N}_{t}$. We call this variant `${\mathcal{M}}_{t}^{\bot}\rightarrow\bm{B}_{t}$'. Since our method focuses both ${\mathcal{M}}_{t}^{\bot}$ and $\mathcal{N}_{t}$, we use $\mathcal{N}_{t}\cap {\mathcal{M}}_{t}^{\bot}\rightarrow \bm{B}_{t}$ to represent our method.

\begin{figure}[t]
  \centering
  \includegraphics[width=\columnwidth]{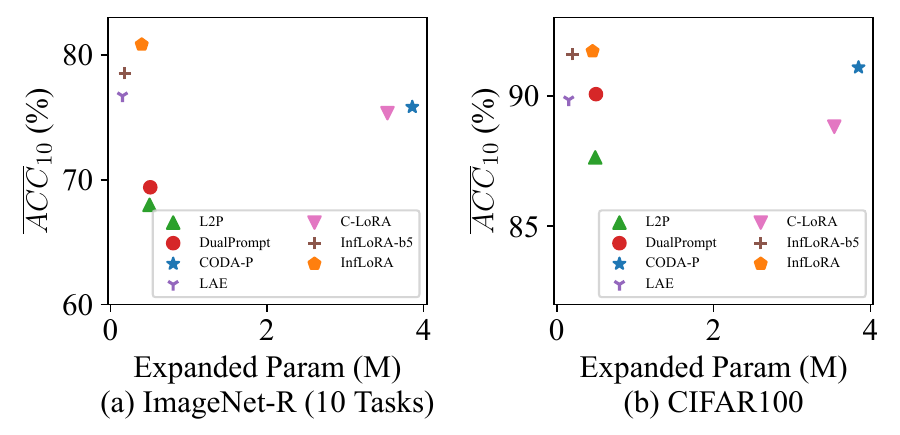}
 \captionsetup{skip=2pt}
 \caption{Variation of the performance of different methods during the learning of ImageNet-R and CIFAR100.}
  \label{fig:hyper}
  \vskip -0.2in
\end{figure}

Table~\ref{tab:ab-imnet-r_main} shows the results of our method and its variants. We can find that all these variants fail to perform as well as our method.
To further demonstrate the performance of different variants, we show the relative accuracy of different tasks after the model learns them all in Figure~\ref{fig:old_new}. Here, relative accuracy is the accuracy of different variants
minus the accuracy of our InfLoRA. Note that the last task is the new task, and the other tasks are old tasks in Figure~\ref{fig:old_new}. As we can see, `Random $\rightarrow\bm{B}_{t}$' and `$\mathcal{N}_{t}\rightarrow\bm{B}_{t}$' outperform `${\mathcal{M}}_{t}^{\bot}\rightarrow\bm{B}_{t}$' on the new task but shows much lower accuracy than `${\mathcal{M}}_{t}^{\bot}\rightarrow\bm{B}_{t}$' and our InfLoRA on the old tasks. This means these two variants fail to eliminate the inference of the new task on the old tasks, making the model suffer from low stability. On the contrary, `${\mathcal{M}}_{t}^{\bot}\rightarrow\bm{B}_{t}$' shows the lowest performance on the new task. This means `${\mathcal{M}}_{t}^{\bot}\rightarrow\bm{B}_{t}$' ignores the plasticity of the model. Our method outperforms all the variants on most of the tasks. This shows that our method can eliminate the interference of the new task on the old tasks and make a better trade-off between stability and plasticity than these variants.

\myparagraph{Varying the Pre-Trained Model}
We also follow the existing method~\cite{wang2023hierarchical} and
perform experiments using a ViT-B/16 pre-trained with two different self-supervised methods, including DINO~\cite{caron2021emerging} and iBOT~\cite{zhou2021image}. All experimental settings, except for the choice of the pre-trained model, are kept consistent with the details outlined in Section~\ref{sec:exp-settings}.

\begin{figure}[t]
  \centering
  \includegraphics[width=\columnwidth]{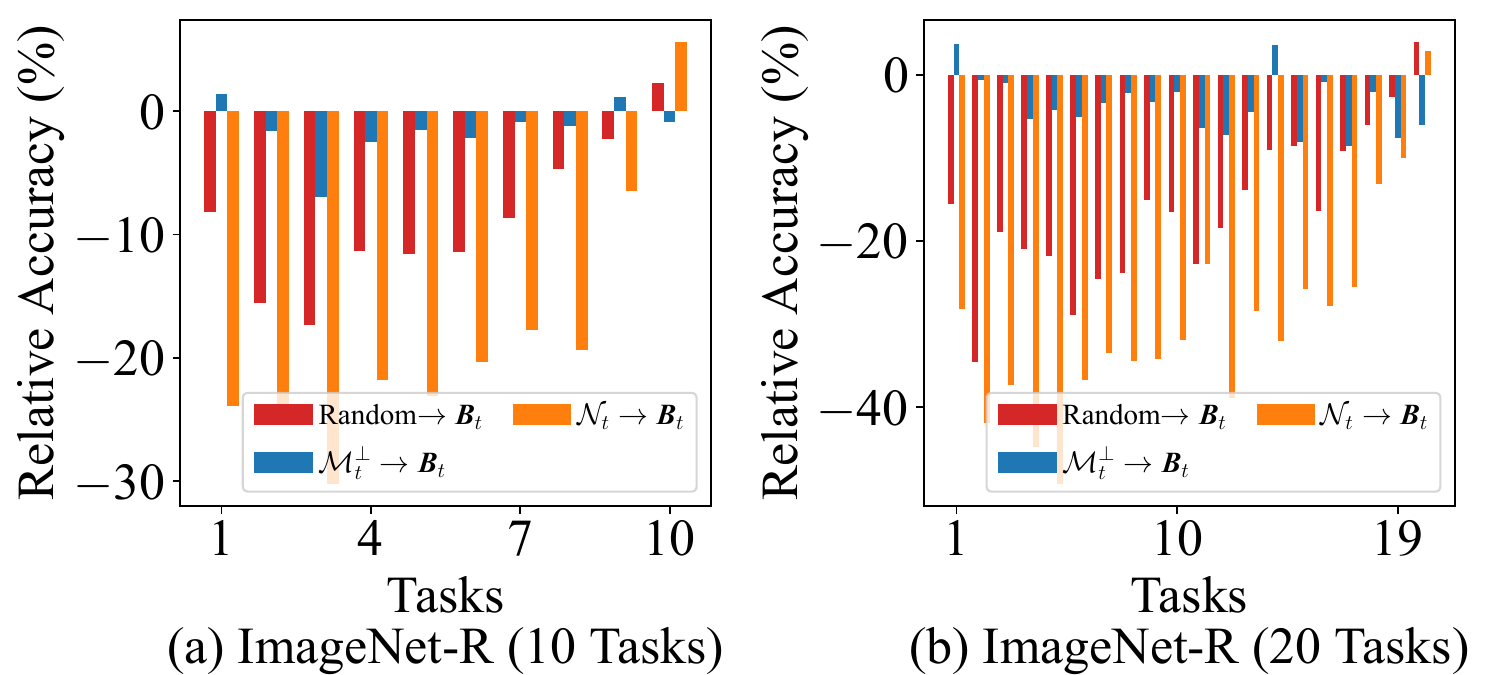}
 \captionsetup{skip=2pt}
 \caption{Relative accuracy of different tasks. Relative accuracy is the accuracy of different variants
  minus the accuracy of InfLoRA.}
  \label{fig:old_new}
  \vskip -0.1in
\end{figure}

\begin{table}[t]
  \caption{Results (\%) of different methods on ImageNet-R~(10 tasks) using various self-supervised pre-trained models. Here, DINO-1k and iBOT-1k indicate that the ViT is pre-trained on ImageNet-1k using these respective methods. %
  }
  \label{tab:cifar100_vary_pre-trained}
  \centering
  \setlength{\tabcolsep}{3pt}  
  
  \begin{tabular}{c|l||c c} 
  \hline
  \rule{0pt}{10pt} & Method & $ACC_{10}$ ($\uparrow$) & $\overline{ACC}_{10}$ ($\uparrow$) \\
  \hline
  \multirow{6}{*}{DINO-1k} & L2P~\cite{wang2022learning} 
  & $56.71 \pm 0.12$ & $63.59 \pm 0.21$ \\
  & DualPrompt~\cite{DBLP:conf/eccv/0002ZESZLRSPDP22} 
  & $60.23 \pm 0.42$ & $66.57 \pm 0.25$ \\
  & CODA-P~\cite{smith2023coda}   
  & $64.02 \pm 0.68$ & $71.50 \pm 0.42$	\\ 
  & C-LoRA~\cite{DBLP:journals/corr/abs-2304-06027}
  & $63.07 \pm 0.36$ & $68.09 \pm 0.41$ \\
  & LAE~\cite{gao2023unified}
  & $61.03 \pm 0.27$ & $69.89 \pm 0.15$ \\
  & \mbox{InfLoRA}-b5
  & $66.16 \pm 0.14$ & $73.01 \pm 0.17$ \\
  & \mbox{InfLoRA}
  & \textbf{68.31 $\pm$ 0.28} & \textbf{76.15 $\pm$ 0.05}  \\ 
  \hline
  \multirow{6}{*}{iBOT-1k} & L2P~\cite{wang2022learning} 
  & $60.80 \pm 0.35$ & $66.58 \pm 0.28$ \\
  & DualPrompt~\cite{DBLP:conf/eccv/0002ZESZLRSPDP22} 
  & $63.78 \pm 0.38$ & $68.88 \pm 0.16$ \\
  & CODA-P~\cite{smith2023coda}   
  & $68.02 \pm 0.48$ & $74.28 \pm 0.47$ \\ 
  & C-LoRA~\cite{DBLP:journals/corr/abs-2304-06027}
  & $68.60 \pm 0.07$ & $73.47 \pm 0.28$ \\
  & LAE~\cite{gao2023unified} 
  & $64.14 \pm 0.29$ & $72.59 \pm 0.22$ \\
  & \mbox{InfLoRA}-b5
  & $69.72 \pm 0.44$ & $76.11 \pm 0.13$ \\
  & \mbox{InfLoRA}
  & \textbf{71.84 $\pm$ 0.09} & \textbf{78.29 $\pm$ 0.09} \\
  \hline
  \end{tabular}
  \vskip -0.2in
\end{table}


Table~\ref{tab:cifar100_vary_pre-trained} shows the results of different methods on ImageNet-R when using various pre-trained models. Comparing these results to those in Table~\ref{tab:imnet-r_main}, we can find that the performance of all methods utilizing self-supervised pre-trained models is lower than the performance of the corresponding methods using supervised pre-trained models. However, our methods still outperform all other methods.

\myparagraph{Combining with Classifier Alignment}
Slow learner with classifier alignment~(SLCA)~\cite{DBLP:conf/iccv/ZhangWKCW23} utilizes feature statistics to align classifiers, demonstrating superior performance compared to methods without aligned classifiers. Our InfLoRA can be combined with classifier alignment~(CA) to get better performance. Specifically, after learning the $t$-th task with parameters $\bm{A}_{t}$ and $\bm{B}_{t}$ and loss~(\ref{eq:masked-loss}), we collect features $\bm{F}_{t}=\{\bm{r}_{i,t}\}_{i=1}^{n_{t}}$ of the $t$-th task. Here, $\bm{r}_{i,t}=f(\bm{x}_{i,t})$ denotes the features extracted by backbone $f_{\bm{\Theta}}(\cdot)$. Then, mean and covariance of features for each class are computed and saved. After that, for each class $c$ the model has seen during continual learning, $S$ samples are sampled from Gaussian distribution $\mathcal{N}(\bm{\mu}_{c},\bm{\Sigma}_{c})$. Here, $\bm{\mu}_{c}$ and covariance $\bm{\Sigma}_{c}$ denote the mean and covariance of the class $c$. Finally, we align the classifier using standard cross-entropy and these samples.
The details of this experiment are given in supplementary material.


Table~\ref{tab:combine-with-ca} shows that our method InfLoRA+CA outperforms SLCA. Note that SLCA tunes all the parameters of the model while our method InfLoRA only tunes the parameters in $\bm{A}_{t}$. Therefore, our InfLoRA+CA is much more efficient than SLCA.

\begin{table}[t]
  \caption{Results (\%) of different methods on ImageNet-R~(10 tasks) and CIFAR100 using classifier alignment~(CA) technique. %
  }
  \label{tab:combine-with-ca}
  \centering
  \setlength{\tabcolsep}{3pt}  
  
  \begin{tabular}{c|l||c c} 
  \hline
  \rule{0pt}{10pt} & Method & $ACC_{10}$ ($\uparrow$) & $\overline{ACC}_{10}$ ($\uparrow$) \\
  \hline
  \multirow{2}{*}{CIFAR100} & SLCA~\cite{DBLP:conf/iccv/ZhangWKCW23}
  & $91.06 \pm 0.24$ & $93.65 \pm 0.19$ \\
  & \mbox{InfLoRA}+CA
  & \textbf{91.59 $\pm$ 0.08} & \textbf{94.39 $\pm$ 0.05}  \\
  \hline
  \multirow{2}{*}{ImageNet-R} & SLCA~\cite{DBLP:conf/iccv/ZhangWKCW23} 
  & $77.34 \pm 0.25$ & $81.35 \pm 0.16$ \\
  & \mbox{InfLoRA+CA}
  & \textbf{79.78 $\pm$ 0.25} & \textbf{83.38 $\pm$ 0.19} \\
  \hline
  \end{tabular}
  \vskip -0.15in
\end{table}

\section{Conclusion}
In this work, we propose a new method, called interference-free low-rank adaptation~(\mbox{InfLoRA}), for continual learning. \mbox{InfLoRA} injects a small number of parameters to reparameterize the pre-trained weights and shows that fine-tuning these injected parameters is equivalent to fine-tuning the pre-trained weights within a subspace. Furthermore, \mbox{InfLoRA} designs this subspace to eliminate the interference of the new task on the old tasks, making a good trade-off between stability and plasticity. Experimental results show that \mbox{InfLoRA} outperforms existing state-of-the-art continual learning methods on multiple datasets.


\section*{Acknowledgment}
This work is supported by NSFC~(No.62192783), National Key R\&D Program of China~(No.2020YFA0713901), and Fundamental Research Funds for the Central Universities~(No.020214380108).

{
    \small
    \bibliographystyle{ieeenat_fullname}
    \bibliography{main}
}

\clearpage

\renewcommand\thesection{\Alph{section}}
\setcounter{page}{1}
\setcounter{section}{0}
\maketitlesupplementary

\section{Details of GPM and DualGPM}\label{sec:details-of-gpm-dualgpm}
GPM and DualGPM are established on the fact that the gradient updates lie in the span of input data points~\cite{zhang2021understanding}. 

For a linear layer, we denote its forward propagation as 
\begin{align}\label{aeq3}
    \bm{e}=\bm{W}\bm{h}+\bm{b},
\end{align}
$\bm{W}\in \mathbb{R}^{d_{I}\times d_{O}}$, $\bm{h}\in\mathbb{R}^{d_{I}}$, and $\bm{e}\in\mathbb{R}^{d_{O}}$. $d_{I}$ and $d_{O}$ denote input and output dimension, respectively.
We further denote the loss function as $\mathcal{L}$. 
Through the chain rule, we can get the gradient of $\bm{W}$:
\begin{align}\label{aeq4}
    \frac{\partial \mathcal{L}}{\partial \bm{W}}&=\frac{\partial \mathcal{L}}{\partial \bm{e}}\frac{\partial \bm{e}}{\bm{W}}=\frac{\partial \mathcal{L}}{\partial \bm{e}}\bm{h}^{T}
    =\left[\begin{matrix}
        a_{1}\bm{h}^{T},\\
        a_{2}\bm{h}^{T},\\
        ...,\\
        a_{d_{O}}\bm{h}^{T}
    \end{matrix}\right].
\end{align}
$[a_{1},a_{2},...,a_{d_{O}}]^{T}$ denotes the vector $\frac{\partial \mathcal{L}}{\partial \bm{e}}$. Through~(\ref{aeq4}), we can find that each column of $\frac{\partial \mathcal{L}}{\partial \bm{W}}$ can be represented as input $\bm{h}$ multiplied by a real value $a_{k}$~($1\leq k\leq d_{O}$). Therefore, in the linear layer, each column of the gradient $\frac{\partial \mathcal{L}}{\partial \bm{W}}$ lies in the span of input. 

\subsection{Gradient Projection Memory}
\label{sec:gpm}
GPM learns a subspace ${\mathcal{M}}_{t}$ with orthogonal bases $\bm{M}_{t}$ to approximate the gradient space of the old tasks. Here, the columns of $\bm{M}_{t}$ contribute a set of orthogonal bases in $\mathcal{M}_{t}$. GPM expands the bases of ${\mathcal{M}}_{t}$ to the bases of ${\mathcal{M}}_{t+1}$ after learning the $t$-th new task. Specifically, GPM computes the inputs matrix $\bm{H}_{t}$ such that each column of $\bm{H}_{t}$ represents an input of this layer. Then, the part of $\bm{H}_{t}$ that has already in $\mathcal{M}_{t}$ is removed by 
\begin{align}\label{projection}
    \hat{\bm{H}}_{t}=\bm{H}_{t}-\bm{M}_{t}(\bm{M}_{t})^{T}\bm{H}_{t}=\bm{H}_{t}-\bm{H}_{t,proj}.
  \end{align}
Please note that when $t=1$, ${\rm dim}({\mathcal{M}}_{t})=0$ and hence $\bm{H}_{t,proj}$ is a zero matrix.
After that, singular value decomposition~(SVD) is performed on $\hat{\bm{H}}_{t}=\hat{\bm{U}}\hat{\bm{\Sigma}}\hat{\bm{V}}^{T}$. Then, $u$ new 
orthogonal bases are chosen from the columns of $\hat{\bm{U}}$ for a minimum of $u$ satisfying the following criteria for given 
threshold $\epsilon_{th}$:
\begin{align}\label{threshold in space}
    ||(\hat{\bm{H}}_{t})_{u}||_{F}^{2}+||\bm{H}_{t,proj}||_{F}^{2}\geq \epsilon_{th}||\bm{H}_{t}||_{F}^{2}.
\end{align}
Here, $(\hat{\bm{H}}_{t})_{u}=[\bm{u}_{1},...,\bm{u}_{u}]$ denotes the components of $\hat{\bm{H}}_{t}$ that correspond to top-$u$ singular values.
Then, subspace ${\mathcal{M}}_{t+1}$ is obtained with the bases $\bm{M}_{t+1}=[\bm{M}_{t},\bm{u}_{1},...,\bm{u}_{u}]$.

\subsection{Dual Gradient Projection Memory}
\label{sec:dualgpm}
Different from GPM that learns a subspace ${\mathcal{M}}_{t}$ with orthogonal bases $\bm{M}_{t}$ to approximate the gradient space of the old tasks, 
DualGPM either learns a subspace ${\mathcal{M}}_{t}$ with orthogonal bases $\bm{M}_{t}$ to approximate the gradient of the old tasks, or learns a subspace ${\mathcal{M}}_{t}^{\bot}$ with orthogonal bases $\bm{M}_{t}^{\bot}$ to approximate orthogonal complement of the gradient space of the old tasks. 

DualGPM decides whether to keep $\bm{M}_{t}$ or $\bm{M}_{t}^{\bot}$ in memory according to ${\rm dim}({\mathcal{M}}_{t})$ and ${\rm dim}({\mathcal{M}}_{t}^{\bot})$. Specifically, during the learning of the first several tasks, ${\rm dim}({\mathcal{M}}_{t})\leq{\rm dim}({\mathcal{M}}_{t}^{\bot})$. At this time, DualGPM maintains $\bm{M}_{t}$, and expands $\bm{M}_{t}$ to $\bm{M}_{t+1}$ after each task. When ${\rm dim}({\mathcal{M}}_{t})$ increases and exceeds ${\rm dim}({\mathcal{M}}_{t}^{\bot})$, DualGPM obtains $\bm{M}_{t}^{\bot}$ through some transformations on $\bm{M}_{t}$. After that, DualGPM only maintains $\bm{M}_{t}^{\bot}$ in memory, and reduces $\bm{M}_{t}^{\bot}$ to $\bm{M}_{t+1}^{\bot}$ after each task. Through this way, the number of bases kept for each layer is ${\rm min}\{{\rm dim}({\mathcal{M}}_{t}),{\rm dim}({\mathcal{M}}_{t}^{\bot})\}$. 

There are three key problems in DualGPM: expanding the bases of ${\mathcal{M}}_{t}$, 
obtaining the bases of ${\mathcal{M}}_{t}^{\bot}$ through the bases of ${\mathcal{M}}_{t}$, 
and reducing the bases of ${\mathcal{M}}_{t}^{\bot}$.

\paragraph{Expanding the Bases of ${\mathcal{M}}_{t}$}
The expansion of ${\mathcal{M}}_{t}$ is the same as that in GPM. 

\paragraph{Transforming ${\mathcal{M}}_{t}$ to ${\mathcal{M}}_{t}^{\bot}$}
DualGPM transforms ${\mathcal{M}}_{t}$ to ${\mathcal{M}}_{t}^{\bot}$
by performing SVD to the matrix $\bm{M}_{t}$. Specifically, let $\bm{M}_{t}=\bm{U}\bm{\Sigma} \bm{V}^{T}$,
the column vectors of $\bm{U}$ which correspond to the zero singular values form a set of orthogonal bases 
of ${\mathcal{M}}_{t}^{\bot}$. Please refer to the paper of DualGPM~\cite{liang2023adaptive} for this explanation.

\paragraph{Reducing the Bases of ${\mathcal{M}}_{t}^{\bot}$}
DualGPM reduces space ${\mathcal{M}}_{t}^{\bot}$ by removing the part of ${\mathcal{M}}_{t}^{\bot}$ which contains the gradient of 
the $t$-th task. Specifically, DualGPM first computes the input matrix $\bm{R}_{t}$. Then, the part of $\bm{R}_{t}$ 
which lies in ${\mathcal{M}}_{t}^{\bot}$ can be computed through 
\begin{align}\label{dual projection}
    \hat{\bm{R}}_{t}^{\bot}=\bm{M}_{t}^{\bot}(\bm{M}_{t}^{\bot})^{T}\bm{R}_{t}=\bm{R}_{t,proj}^{\bot}.
  \end{align}
After that, SVD is performed on $\hat{\bm{R}}_{t}^{\bot}=\hat{\bm{U}}^{\bot}\hat{\bm{\Sigma}}^{\bot}(\hat{\bm{V}}^{\bot})^{T}$. Then, $k$ new orthogonal bases are chosen from the columns of $\hat{\bm{U}}^{\bot}$ for a maximum of $k$ satisfying the following criteria for the given 
threshold $\epsilon_{th}$~(the same as $\epsilon_{th}$ in~(\ref{threshold in space})):
\begin{align}\label{threshold in dual space}
    ||(\hat{\bm{R}}_{t}^{\bot})_{k}||_{F}^{2}\leq (1-\epsilon_{th})||\bm{R}_{t}||_{F}^{2}.
\end{align}
Let $\bm{Z}=(\hat{\bm{R}}_{t}^{\bot})_{k}=[\bm{u}_{1}^{\bot},...,\bm{u}_{k}^{\bot}]$, $\mathcal{Z}={\rm span}\{\bm{u}_{1}^{\bot},...,\bm{u}_{k}^{\bot}\}$. Here, $\mathcal{Z}$ is the subspace of ${\mathcal{M}}_{t}^{\bot}$ that contains the gradient of the $t$-th task. 
DualGPM removes $\mathcal{Z}$ from ${\mathcal{M}}_{t}^{\bot}$ to get ${\mathcal{M}}_{t+1}^{\bot}$. Specifically, let 
$\hat{\bm{M}}_{t}^{\bot}=\bm{M}_{t}^{\bot}-\bm{Z}(\bm{Z}^{T})\bm{M}_{t}^{\bot}$. 
DualGPM performs the second SVD on $\hat{\bm{M}}_{t}^{\bot}=\widetilde{\bm{U}}^{\bot}\widetilde{\bm{\Sigma}}^{\bot}(\widetilde{\bm{V}} ^{\bot})^{T}$. The columns of $\widetilde{\bm{U}}^{\bot}$ which correspond to the non-zero singular values form the bases $\bm{M}_{t+1}^{\bot}$. Please refer to the paper of DualGPM~\cite{liang2023adaptive} for this explanation.

\begin{figure}[t]
  \centering
  \includegraphics[width=1.0\columnwidth]{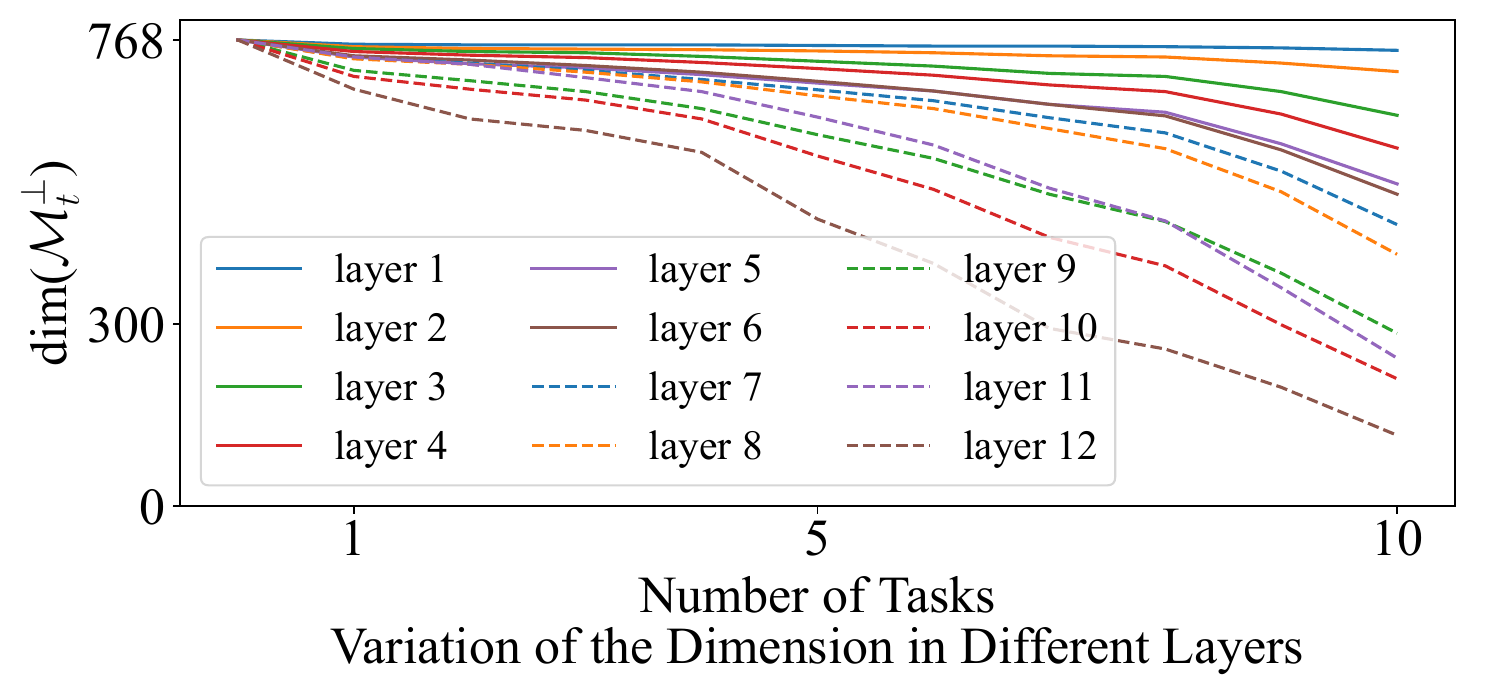}
 \captionsetup{skip=2pt}
 \caption{Change of the dimension of subspace $\mathcal{M}_{t}^{\bot}$ throughout the whole learning process.}
  \label{fig:change-dim}
  \vskip -0.1in
\end{figure}

\subsection{Approximation Error in DualGPM}\label{sec:discuss}
DualGPM either learns a subspace ${\mathcal{M}}_{t}$ to approximate the gradient space of the old tasks or learns a subspace ${\mathcal{M}}_{t}^{\bot}$ to represent the orthogonal complement of the gradient of the old tasks. From Seciton~\ref{sec:dualgpm}, we can find that the approximation error is related to the hyperparameter $\epsilon_{th}$ in~(\ref{threshold in space}) and~(\ref{threshold in dual space}). Specifically, 
when the value of $\epsilon_{th}$ in~(\ref{threshold in space}) and~(\ref{threshold in dual space}) increases, the approximation error decreases. As a result, the dimension of subspace ${\mathcal{M}}_{t}$ becomes larger, while the dimension of ${\mathcal{M}}_{t}^{\bot}$ becomes smaller. Note that our InfLoRA constrains the update of the model to lie within the subspace $\mathcal{N}_{t}\cap {\mathcal{M}}_{t}^{\bot}\subseteq {\mathcal{M}}_{t}^{\bot}$. Therefore, we can adjust the value of $\epsilon_{th}$ to adjust the space for learning the new task. 
Here, for all the experiments, we set 
\begin{align}
  \epsilon_{th}=\epsilon + \frac{(1 - \epsilon)*t}{T},
\end{align}
where $t$ denotes the task id and $T$ denotes the total number of tasks. In other words, we gradually increase the value of $\epsilon_{th}$ as the number of tasks increases throughout the whole learning process. 
Table~\ref{tab:hyperparameters} shows the setting of $\epsilon$ in our InfLoRA. 

Figure~\ref{fig:change-dim} illustrates the variation of the dimension of the subspace $\mathcal{M}_{t}^{\bot}$ in different Transformer layers of ViT-B/16. We can find that the dimension of the subspace $\mathcal{M}_{t}^{\bot}$ in different Transformer layers of ViT-B/16 is always much larger than zero, which means the space for learning the new task always exists throughout the whole learning process.




\begin{table*}[h]
  \renewcommand\arraystretch{1}
  \renewcommand\tabcolsep{5pt}
  \setlength{\belowcaptionskip}{0.1cm}
  \caption{List of hyper-parameters for different methods. The meaning of different hyperparameters is given in Section~\ref{sec:compute-expanded}. The hyperparameter $\epsilon$ in InfLoRA is explained in Section~\ref{sec:discuss}}
  \label{tab:hyperparameters}
  \small
  \centering
  \begin{tabular}{ll}
    \toprule
    Methods     & Hyper-Parameters \\
    \midrule  
    L2P       & lr: 0.001~(ImageNet-R, DomainNet, CIFAR100) \\
              & $l$: 1~(ImageNet-R, DomainNet, CIFAR100) \\
              & $p$: 30~(ImageNet-R, DomainNet, CIFAR100) \\
              & $e$: 20~(ImageNet-R, DomainNet, CIFAR100) \\
    \midrule
    DualPrompt & lr: 0.001~(ImageNet-R, DomainNet, CIFAR100) \\
               & $l_{E}$: 3~(ImageNet-R, DomainNet, CIFAR100) \\
               & $l_{S}$: 2~(ImageNet-R, DomainNet, CIFAR100) \\
               & $e_{E}$: 20~(ImageNet-R, DomainNet, CIFAR100) \\
               & $e_{S}$: 6~(ImageNet-R, DomainNet, CIFAR100) \\
    \midrule
    CODA-P       & lr: 0.001~(ImageNet-R, DomainNet, CIFAR100) \\
                 & $l$: 5~(ImageNet-R, DomainNet, CIFAR100) \\
                 & $p$: 100~(ImageNet-R, DomainNet, CIFAR100) \\
                 & $e$: 8~(ImageNet-R, DomainNet, CIFAR100) \\
    \midrule
    LAE       & lr: 0.001~(ImageNet-R, DomainNet, CIFAR100) \\
              & $r$: 5~(ImageNet-R, DomainNet, CIFAR100) \\
    \midrule
    C-LoRA    & lr: 0.001~(ImageNet-R, DomainNet, CIFAR100) \\
              & $r$: 64~(ImageNet-R, DomainNet, CIFAR100) \\
              & $\lambda$: 0.5~(ImageNet-R, DomainNet, CIFAR100) \\
    \midrule
    InfLoRA-b5 & lr: 0.001~(CIFAR100), 0.0005~(ImageNet-R, DomainNet) \\
               & $r$: 10~(ImageNet-R, CIFAR100), 20~(DomainNet) \\
               & $\epsilon$: $0.99$~(ImageNet-R), $0.95$~(CIFAR100, DomainNet) \\
    \midrule
    InfLoRA    & lr: 0.0005~(ImageNet-R, DomainNet, CIFAR100) \\
               & $r$: 10~(ImageNet-R, DomainNet, CIFAR100)\\
               & $\epsilon$: $0.98$~(ImageNet-R), $0.95$~(CIFAR100, DomainNet) \\
    \bottomrule
  \end{tabular}
\end{table*}

\section{More Experimental Details}\label{sec:hyper}
\subsection{Training Details}
For all the methods in all the experiments except for the comparison with SLCA, the batch size is set to 128 to follow many existing continual learning methods based on PEFT~\cite{smith2023coda,wang2023hierarchical}. Hyperparameters for different methods are selected based on the experimental settings in existing works~\cite{smith2023coda,wang2022learning,gao2023unified} or through hyperparameter search. For example, Adam is used as the optimizer with running averages of gradient and its square~($\beta_{1} = 0.9$, $\beta_{2} = 0.999$). The learning rate is searched among [5e-4, 1e-3, 2e-3, 1e-2] for all the methods through the validation sets we split from the training sets. For the hyperparameter $r$ in our \mbox{InfLoRA}, we search it among [1, 5, 10, 20, 30] through the validation sets we split from the training sets. Table~\ref{tab:hyperparameters} shows the hyperparameters of different methods.

When compared with SLCA, our method is combined with classifier alignment~(CA). At this time, we follow SLCA to train the expanded LoRA branches and classifiers using the SGD optimizer. Each task is trained for 50 epochs on ImageNet-R, 20 epochs on CIFAR100 and 5 epochs on DomainNet. The batch size is set to 128.



\subsection{Expanded Parameters}\label{sec:compute-expanded}
For L2P~\cite{wang2022learning}, the expanded parameters consist of the inserted prompts and their corresponding keys. Let $d$ denote the embedding dimension, $e$ denote the prompt length, $p$ denote the number of prompts, and $l$ denote the number of layers in which prompts are inserted. To compute the total number of expanded parameters, the formula used is $dlp(e+1)$.

For DualPrompt~\cite{DBLP:conf/eccv/0002ZESZLRSPDP22}, the expanded parameters also consist of the inserted prompts and corresponding keys. However, DualPrompt contains expert prompts and shared prompts. Let $d$ denote the embedding dimension, $T$ denote the number of tasks, $e_{E}$ denote the expert prompt length, $e_{S}$ denote the shared prompt length, $l_{E}$  denote the number of layers in which expert prompts are inserted and $l_{S}$ denote the number of layers in which shared prompts are inserted. To compute the total number of expanded parameters, the formula used is $d[Tl_{E}(e_{E}+1)+e_{S}l_{S}]$.

For CODA-Prompt~\cite{smith2023coda}, the expanded parameters consist of the inserted prompts, corresponding keys and attention parameters. Let $d$ denote the embedding dimension, $e$ denote the prompt length, $p$ denote the number of prompts, and $l$ denote the number of layers in which prompts are inserted. To compute the total number of expanded parameters, the formula used is $dlp(e+2)$.

For LAE~\cite{gao2023unified}, we implement it with LoRA. Therefore, the expanded parameters in this method consist of the inserted LoRA modules and the corresponding ensemble modules. Let $d$ denote the embedding dimension, $r$ denote the rank, and $l$ denote the number of layers in which LoRA modules are inserted. Since LAE inserts LoRA modules into key and value projection in multi-head attention, the number of expanded parameters is $8ldr$.

For C-LoRA~\cite{DBLP:journals/corr/abs-2304-06027}, the expanded parameters in this method consist of the inserted LoRA modules. Let $d$ denote the embedding dimension, $r$ denote the rank, and $l$ denote the number of layers in which LoRA modules are inserted. Since C-LoRA inserts LoRA modules into query, key and value projection in multi-head attention, the number of expanded parameters is $6ldr$.

For our methods, since we integrate the branches of the old tasks when the model learns a new task, the number of expanded parameters equals the number of parameters in a single branch. Let $d$ denote the embedding dimension, $r$ denote the rank, and $l$ denote the number of layers in which our InfLoRA modules are inserted. Since we also insert InfLoRA modules into key and value projection in multi-head attention, the number of expanded parameters is $4ldr$.

\section{More Experimental Results}

\begin{table*}[t]
  \caption{The comparison between our \mbox{InfLoRA} and more methods on ImageNet-R. %
  }
  \label{tab:more-compare-img}
  \centering
  \setlength{\tabcolsep}{5pt}  
  
  \begin{tabular}{l||c c||c c||c c} 
  \hline 
  Tasks & \multicolumn{2}{c||}{5} & \multicolumn{2}{c||}{10} & \multicolumn{2}{c}{20} \\
  \hline
  \rule{0pt}{10pt} Method & $ACC_5$ ($\uparrow$) & $\overline{ACC}_5$ ($\uparrow$) & $ACC_{10}$ ($\uparrow$) & $\overline{ACC}_{10}$ ($\uparrow$) & $ACC_{20}$ ($\uparrow$) & $\overline{ACC}_{20}$ ($\uparrow$) \\
  \hline
  SeqLoRA
  & $70.96 \pm 0.25$ & $79.14 \pm 0.32$ 
  & $64.32 \pm 0.09$ & $74.78 \pm 0.29$
  & $56.98 \pm 0.29$ & $69.29 \pm 0.26$ \\
  HiDe-Prompt~\cite{wang2023hierarchical} 
  & $76.82 \pm 0.91$ & $77.19 \pm 0.34$ 
  & $75.06 \pm 0.12$ & $76.60 \pm 0.01$ 
  & $66.88 \pm 1.29$ & $76.71 \pm 0.23$ \\
  \mbox{InfLoRA}
  & \textbf{77.52 $\pm$ 0.37} & \textbf{82.01 $\pm$ 0.12} 
  & \textbf{75.65 $\pm$ 0.14} & \textbf{80.82 $\pm$ 0.24} 
  & \textbf{71.01 $\pm$ 0.45} & \textbf{77.28 $\pm$ 0.45}  \\ 
  \hline
  \end{tabular}
  \vskip -0.1in
\end{table*}

\begin{table}[t]
  \caption{The comparison between our \mbox{InfLoRA} and more methods on DomainNet. %
  }
  \label{tab:more-compare-domain}
  \centering
  \setlength{\tabcolsep}{5pt}  
  
  \begin{tabular}{c||c c} 
  \hline
  \rule{0pt}{10pt} & $ACC_5$ ($\uparrow$) & $\overline{ACC}_5$ ($\uparrow$) \\
  \hline
  SeqLoRA 
  & $71.69 \pm 0.13$ & $78.68 \pm 0.12$ \\
  HiDe-Prompt~\cite{wang2023hierarchical} 
  & $71.48 \pm 0.10$ & $76.15 \pm 0.05$ \\
  \mbox{InfLoRA}
  & \textbf{74.53 $\pm$ 0.23} & \textbf{79.57 $\pm$ 0.57} \\
  \hline
  \end{tabular}
\end{table}

\begin{table}[t]
    \caption{Results (\%) of different methods on ImageNet-R~(10 tasks) using various self-supervised pre-trained models. Here, DINO-1k and iBOT-1k indicate that the ViT is pre-trained on ImageNet-1k using these respective methods. %
    }
    \label{tab:add-cifar100_vary_pre-trained}
    \centering
    \setlength{\tabcolsep}{2pt}  
    
    \begin{tabular}{c|l||c c} 
    \hline
    \rule{0pt}{10pt} & Method & $ACC_{10}$ ($\uparrow$) & $\overline{ACC}_{10}$ ($\uparrow$) \\
    \hline
    \multirow{4}{*}{DINO-1k} & 
    SeqLoRA 
    & $60.67 \pm 0.11$ & $66.29 \pm 0.21$ \\
    & HiDe-Prompt~\cite{wang2023hierarchical} 
    & $68.11 \pm 0.18$ & $71.70 \pm 0.01$ \\
    & \mbox{InfLoRA}
    & \textbf{68.31 $\pm$ 0.28} & \textbf{76.15 $\pm$ 0.05}  \\
    \hline
    \multirow{4}{*}{iBOT-1k} & 
    SeqLoRA 
    & $66.87 \pm 0.40$ & $71.80 \pm 0.28$ \\
    & HiDe-Prompt~\cite{wang2023hierarchical} 
    & $71.33 \pm 0.21$ & $73.62 \pm 0.13$ \\
    & \mbox{InfLoRA}
    & \textbf{71.84 $\pm$ 0.09} & \textbf{78.29 $\pm$ 0.09} \\
    \hline
    \end{tabular}
  \end{table}

\subsection{Compare with More Methods}
We compare with SeqLoRA, which initials LoRA modules and finetunes these modules on multiple tasks sequentially without any operation to overcome forgetting. The results are given in Table~\ref{tab:more-compare-img}, Table~\ref{tab:more-compare-domain} and Table~\ref{tab:add-cifar100_vary_pre-trained}. We can find that our method outperforms this method.

A recent continual learning PEFT method hierarchical decomposition prompt~(HiDe-Prompt)~\cite{wang2023hierarchical} proposes to perform continual learning hierarchically. This method maintains a set of task-specific prompts for each task and contains two stages during training and inference. Specifically, given an input sample, Hide-Prompt infers the prompt index and then uses the corresponding prompt to infer its label. We also compare our method with this method, and the results are also given in Table~\ref{tab:more-compare-img}, Table~\ref{tab:more-compare-domain} and Table~\ref{tab:add-cifar100_vary_pre-trained}. We can find that our method outperforms this method. Furthermore, this method shows comparable performance to our method in terms of final accuracy $ACC_{T}$ on ImageNet-R. However, there is a notable gap between this method and our method in terms of averaged accuracy $\overline{ACC}_{T}$. Note that averaged accuracy $\overline{ACC}_{T}$ is more important than final accuracy $ACC_{T}$ since $\overline{ACC}_{T}$ represents the performance of the model over the whole learning process.


\subsection{Hyperparameter Analysis}
We perform the hyperparameter analysis for our method \mbox{InfLoRA}. There are two specific hyperparameters in our method \mbox{InfLoRA}. The first hyperparameter is $r$, which controls the expanded parameters in \mbox{InfLoRA}. The second hyperparameter is $\epsilon$, which is not the specific hyperparameter of our \mbox{InfLoRA} but the hyperparameter introduced by DualGPM. This hyperparameter controls the component maintained in the matrix $\bm{M}_{t}$.

Figure~\ref{fig:dif-hyper} shows the results of our method with different values of $r$ or $\epsilon$. We can find that the performance of InfLoRA increases first and then decreases with the increase of $r$ and $\epsilon$.

\begin{figure}
    \centering
    \includegraphics[width=1.0\columnwidth]{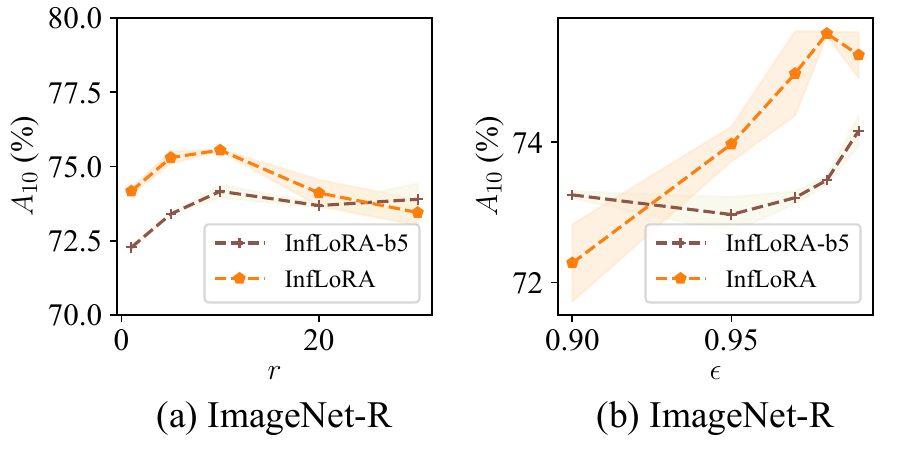}
    \caption{(a) Analysis of the hyperparameter $r$.~(b) Analysis of the hyperparameter $\epsilon$.}
    \label{fig:dif-hyper}
  \end{figure}

\subsection{Domain Incremental Setting}
InfLoRA can be extended to the domain incremental setting. Specifically, DomainNet contains six domains and InfLoRA can learn on these domains sequentially. Table~\ref{tab:domain} shows that InfLoRA outperforms other baselines.


\begin{table}[t]
  \caption{Results of DomainNet for domain incremental setting.}
  \label{tab:domain}
  \centering
  \setlength{\tabcolsep}{5pt}  
  
  \begin{tabular}{l||c c} 
  \hline
  \rule{0pt}{10pt} Method & $ACC_{6}$ ($\uparrow$) & $\overline{ACC}_{6}$ ($\uparrow$) \\
  \hline
  L2P~\cite{wang2022learning} 
  & $34.15 \pm 0.10$ & $49.84 \pm 0.03$ \\
  DualPrompt~\cite{DBLP:conf/eccv/0002ZESZLRSPDP22} 
  & $35.24 \pm 0.12$ & $48.44 \pm 0.13$ \\
  CODA-P~\cite{smith2023coda}   
  & $56.89 \pm 0.04$ & $57.56 \pm 0.03$	\\ 
  C-LoRA~\cite{DBLP:journals/corr/abs-2304-06027} 
  & $44.96 \pm 0.01$ & $52.95 \pm 0.08$ \\
  \mbox{InfLoRA}
  & \textbf{\textbf{68.44 $\pm$ 0.04}} & \textbf{67.46 $\pm$ 0.03} \\
  \hline
  \end{tabular}
  \vskip -0.1in
\end{table}

\subsection{Inference Efficiency}
Existing methods often involve multiple forward propagations through the pre-trained backbone. Specifically, prompt-based continual learning methods, including L2P, DualPrompt, and CODA-P, require an extra forward propagation to generate instance-specific prompts. LAE requires an extra forward propagation for ensembling. In contrast, our \mbox{InfLoRA} only requires a single forward propagation through the pre-trained backbone. Figure~\ref{fig:time} provides a comparison of the time consumed by different methods during inference. We can find that our method consistently outperforms existing methods in terms of time efficiency.

\begin{figure}[t]
  \centering
  \includegraphics[width=0.93\columnwidth]{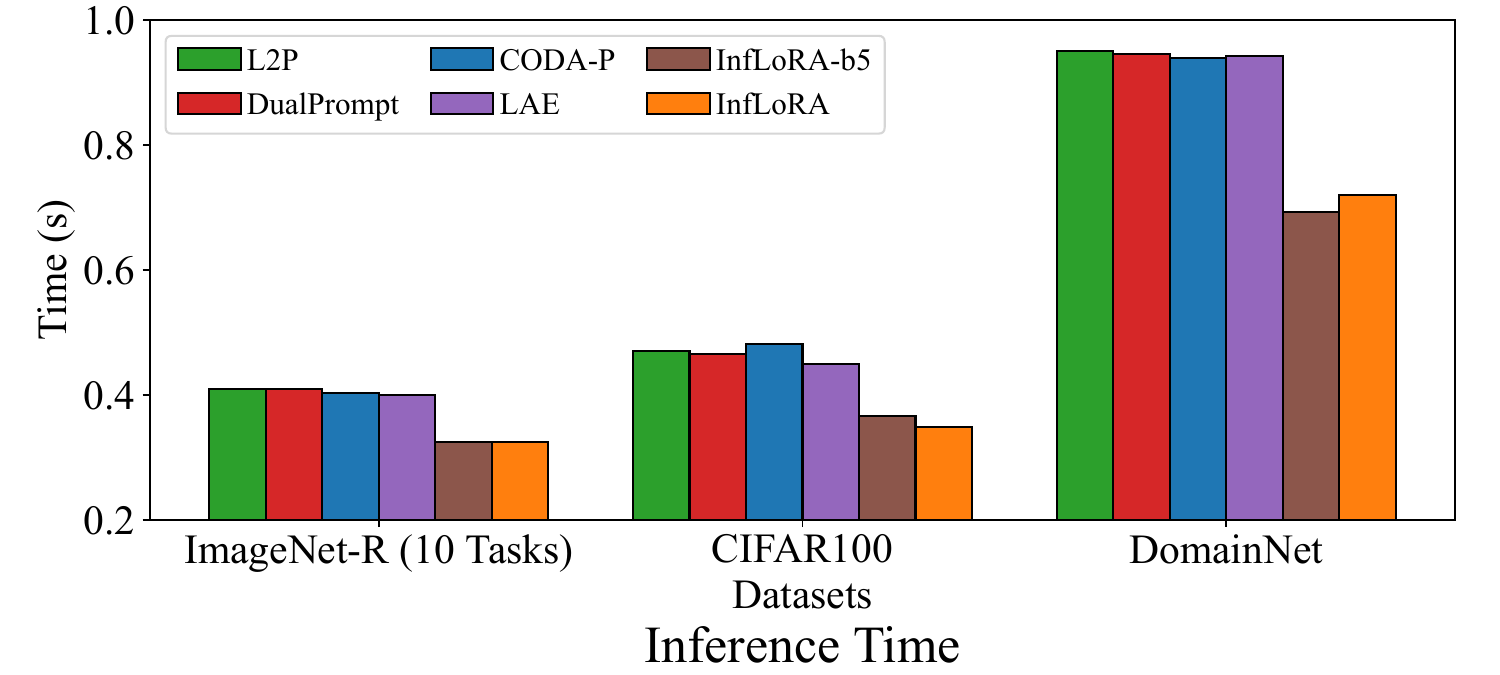}
  \caption{The time of inferring one task for different methods.}
  \label{fig:time}
  \vskip -0.15in
\end{figure}

\end{document}